\pdfoutput=1
\documentclass[11pt]{article}
\usepackage{caption}
\usepackage{subcaption}
\usepackage{amsmath,amssymb,amsfonts,mathrsfs,mathtools,bm,bbm, dsfont}
\usepackage{graphicx}
\usepackage{pifont,booktabs} 
\usepackage{float}      
\usepackage{adjustbox} 

\usepackage[margin=1in]{geometry}
\usepackage{amsmath,amssymb,amsthm,amsfonts,mathtools,bm,bbm,mathrsfs,dsfont}
\usepackage{mathtools}
\usepackage{enumitem}
\usepackage{hyperref}
\usepackage{mathrsfs}
\usepackage{graphicx}
\usepackage{subcaption}
\usepackage{microtype}
\usepackage[dvipsnames]{xcolor}
\usepackage{graphicx}
\usepackage{subcaption}
\usepackage{booktabs}
\usepackage{float}
\usepackage{adjustbox}
\usepackage{multicol}
\usepackage{accents,makecell}
\usepackage{comment}
\usepackage{algorithm}
\usepackage{algorithmic}
\usepackage[textsize=tiny]{todonotes}
\usepackage{hyperref}
\usepackage[capitalize,noabbrev]{cleveref}
\usepackage{cite}
\setlist[itemize]{leftmargin=*,nolistsep,noitemsep}
\setlist[enumerate]{leftmargin=*,nolistsep,noitemsep}
\usepackage{tikz}
\usetikzlibrary{arrows.meta, positioning, fit, calc}

\DeclareMathOperator*{\argmax}{arg\,max}

\newcommand{\Acal}{\mathcal{A}}
\newcommand{\Bcal}{\mathcal{B}}

\newcommand{\Fcal}{\mathcal{F}}

\newcommand{\Ocal}{\mathcal{O}}

\newcommand{\Rset}{\mathbb{R}}
\newcommand{\E}{\mathbb{E}}
\newcommand{\ve}{\varepsilon}

\newcommand{\vnorm}[1]{\left\|#1\right\|}
\newcommand{\probb}[1]{\mathbb{P}\!\left\{#1\right\}}

\theoremstyle{plain}
\newtheorem{theorem}{Theorem}[section]

\newtheorem{lemma}[theorem]{Lemma}
\newtheorem{corollary}[theorem]{Corollary}
\theoremstyle{definition}

\newtheorem{assumption}[theorem]{Assumption}
\newtheorem{property}[theorem]{Property}
\theoremstyle{remark}

\title{\LARGE \bf
A Unified Optimism-Agnostic Framework for Linear Bandits 
over Spherical Action Sets
}

\author{Arda G\"u\c{c}l\"u\qquad Subhonmesh Bose \qquad John R. Birge\thanks{A. Guclu and S. Bose are with the Department of Electrical and Computer Engineering and the Coordinated Science Laboratory from the University of Illinois Urbana-Champaign, Urbana, IL  61801. J. R. Birge is with the University of Chicago Booth School of Business, Chicago, IL 60637. Emails: \href{mailto@illinois.edu}{aguclu2@illinois.edu},
\href{mailto@illinois.edu}{boses@illinois.edu},
\href{mailto.Birge@ChicagoBooth.edu}{John.Birge@ChicagoBooth.edu}. This work was partially supported by the U.S. National Science Foundation under Grant No. ECCS-EPCN 2349418.}}

\begin{document}

\maketitle

\begin{abstract}
Linear bandits model sequential decision-making problems with noisy rewards that are linear in the decision variable, where an agent must simultaneously learn about an unknown  parameter that governs the mean rewards, while maximizing (expected) rewards over time. Two prominent algorithmic families--upper confidence bound (UCB) and Thompson sampling (TS)--achieve a balance of exploration (to estimate said parameter) and exploitation (utilization of knowledge about it) across time. The quality of estimation of that parameter depends on the eigenvalues of a design matrix. In this paper, we begin by showing that if the inference quality obtained from exploration, encoded in the minimum eigenvalue of the design matrix, grows $\gtrsim \sqrt{t}$ with time $t$, while actions remain sufficiently concentrated for exploitation, then an algorithm produces optimal high-probability $\mathcal{O}(\sqrt{T}\log T)$-regret rate over a time-horizon $T$ for spherical action sets. This analysis is algorithm-agnostic and follows an alternative route to the classical optimism-based elliptical-potential argument for regret analysis. Then, we illustrate that variants of UCB and TS satisfy the inference and concentration properties and in turn, enjoy optimal regret rate. In effect, our results provide a modular framework that can be used to analyze linear bandit algorithms and explicitly connect quality of parameter estimation to optimal regret accumulation.
\end{abstract}

\section{Introduction}
Sequential decision-making requires balancing immediate reward with information acquisition. Actions that are optimal under current knowledge may be uninformative, while informative actions may incur immediate loss. Regret from suboptimal actions therefore encapsulates two costs---insufficient precision in estimating the unknown environment and the cost of acquiring information to improve that precision.
We study this tradeoff in stochastic linear bandits with spherical action sets. The spherical geometry yields a decomposition of one-step regret into a precision cost, due to error in the estimated optimal action, and an information-acquisition cost due to deviation from that action. This reduces optimal regret to two requirements: sufficient accumulation of information accompanied by controlled exploration. We encode these two properties mathematically and illustrate how they yield order-optimal regret accumulation.

We verify these properties for linear Thompson Sampling (TS) algorithm from \cite{linear_thompson_sampling_revisited} and the Upper Confidence Bound (UCB) algorithm from \cite{bandit_algorithms_book, improved_algorithms_for_stochastic_bandits} with a modified confidence set. These two algorithms have completely different exploration mechanisms. The former relies on randomization, and the latter is a deterministic algorithm that is based on the principle of \emph{optimism in the face of uncertainty} (OFUL). We demonstrate how these two algorithms are in fact tied together in the rate at which they both accumulate information and incur exploration costs. Their optimal regret guarantees then follow from a common analytical pipeline.

An agent in a linear bandit environment seeks to maximize the cumulative reward $\sum_{t=1}^T Y_t := \sum_{t=1}^T (\theta^{\star \top} a_t+\ve_t )$ over a time-horizon $T$, upon choosing actions from a compact set $a_t \in \Acal \subset \Rset^n$ at time $t$, without knowing the governing parameter $\theta^\star \in \Theta \subset \Rset^n$, mapping history of rewards and actions up until a point to an action at the next time step. The zero-mean noise $\ve_t$ is assumed independent from that history and the action $a_t$. A reward-maximizing action $a^\star(\theta^\star) \in \argmax_{a \in \Acal} \ \theta^{\star\top} a$ is assumed to exist, with optimal expected per-stage reward, $ \left(\theta^{\star}\right)^{\top} a^{\star}\left(\theta^{\star}\right)$. {We examine spherical action sets defined by
$
\mathcal A
:=
\left\{
a\in\mathbb R^n:\|a\|_2=1
\right\},
$
for which the optimal-action map is
$
a^\star(\theta)=\theta/\|\theta\|
$
whenever $\theta\neq0$. For completeness, when $\theta=0$, we let $a^\star(0)$ be any fixed element of $\mathcal A$.} For any action selection policy, the cumulative expected regret up to time $T$ is,
\begin{align}
 \mathscr{R}_{\theta^\star}(T)
    := \sum_{t=1}^T  \underbrace{ \left( \theta^{\star\top} a^{\star}(\theta^\star)- \theta^{\star\top} a_t \right)}_{:= r_{\theta^\star}(a_t)} .
    \label{eq:regret.def}
\end{align}
For linear bandits, under fairly general assumptions, regret is $\Omega(\sqrt{T})$\footnote{We use the standard notation $\Ocal$, $\Omega$, $\lesssim$, $\gtrsim$, and $\simeq$. In the introduction, we suppress logarithmic factors.} and popular algorithms such as Thompson sampling (TS) and upper confidence bound (UCB) achieve that rate (within polylog factors); see \cite{improved_algorithms_for_stochastic_bandits, linear_thompson_sampling_revisited, contextual_bandits_with_ucb,thompson_sampling_for_contextual_bandits_with_linear_payoffs,context_bandits_ucb, russo2014learning}.  Given a sequence of observations $Y_1, \ldots, Y_{t}$, one can estimate $\theta^\star$ via
\begin{align}
\begin{aligned}
    \widehat{\theta}_{t + 1}
    & :=\min_{\theta}\left[ \sum_{s=1}^{t}\left(\theta^\top a_s-Y_s\right)^2+\lambda \|\theta\|^2\right] \\
    & = \underbrace{\left({\lambda  I_n + \sum_{s=1}^{t} a_s a_s^\top }\right)^{-1}}_{:=V_t^{-1}} \left(\sum_{s = 1}^t a_s Y_s \right)
    \label{eq:Vt.def}
    \end{aligned}
\end{align}
with a regularization parameter $\lambda >0$ and the identity matrix $I_n$. The \emph{design matrix} $V_t \in \Rset^{n \times n}$, and its minimum eigenvalue will play a crucial role in the sequel.
A point-estimate-based control will lead to the choice $a_{t+1} = a^{\star}(\widehat{\theta}_{t +1})$. Such a control policy can converge to a wrong parameter estimate, (see \cite{incomplete_learning_example}) and incur linear regret. One needs to explore away from myopically exploiting the current parameter estimate to avoid such incomplete learning. 

Thompson Sampling (TS) and Upper Confidence Bound (UCB) are two classical algorithms known to balance exploration and exploitation and to achieve order-optimal regret in stochastic linear bandits. TS maintains a distribution over the unknown parameter at each round, samples a parameter $\theta_t$ from this distribution, and plays the corresponding greedy action $a_t=a^\star(\theta_t)$. The distribution is then updated using Bayes' rule. A computationally simpler variant, called linear Thompson sampling, replaces Bayesian posterior sampling with sampling from a $V_t$-colored distribution centered at the regularized least-squares estimate \cite{linear_thompson_sampling_revisited}. This construction naturally adapts the amount of exploration to the available information: when the eigenvalues of $V_t$ are small, the sampling distribution is broad, encouraging exploration, whereas as the eigenvalues increase, the distribution concentrates around the estimator.

UCB follows a different philosophy based on the principle of optimism in the face of uncertainty. At each round, it constructs a confidence region containing $\theta^\star$ with high probability and selects the greedy action corresponding to the most optimistic parameter (the parameter $\theta$ in this region maximizing $\theta^\top a^\star(\theta)$). In the classical algorithm of \cite{improved_algorithms_for_stochastic_bandits}, this confidence region is an ellipsoid whose geometry is determined by $V_t$. As eigenvalues of $V_t$ increase, this confidence region shrinks around $\widehat \theta_t$, and hence, implicitly balances exploration and exploitation.

The failure of purely greedy policies, together with the success of non-myopic methods such as TS and UCB, shows that accurate decision-making requires sufficient exploration. However, effective learning requires not only enough exploration to minimize the cost of unreliable inference, but also enough exploitation to keep the cost of acquiring information under control. On spherical action sets, these two costs appear directly in the geometry of regret. Indeed, if
$
\theta^\star\neq 0
$, then the optimal action is
$
a^\star(\theta^\star)
=
\theta^\star/\|\theta^\star\|.
$
Consequently, for every \(a\in\mathcal{A}\),
\begin{align}
r_{\theta^\star}(a)
=
\theta^{\star\top}
\left(
a^\star(\theta^\star)-a
\right)=
\|\theta^\star\|
\left(
1-a^\star(\theta^\star)^\top a
\right) =
\frac{\|\theta^\star\|}{2}
\left\|
a^\star(\theta^\star)-a
\right\|^2.
\label{eq:spherical_regret_geometry}
\end{align}
Thus, per-step regret is exactly proportional to the squared distance between the played action and the true optimal action. Using the squared triangle inequality,
\begin{align}
r_{\theta^\star}(a_t)
\lesssim\;&
\underbrace{
\left\|
a^\star(\theta^\star)
-
a^\star(\widehat{\theta}_t)
\right\|^2
}_{\text{cost of insufficient precision}}
\nonumber\\
&+
\underbrace{
\left\|
a^\star(\widehat{\theta}_t)
-
a_t
\right\|^2
}_{\text{cost of acquiring information}}.
\label{eq:regret_two_costs}
\end{align}
The first term measures how an imperfect estimate of the problem translates into an imperfect decision. The second term is the 
loss incurred by moving away from the currently greedy action, as may be necessary to improve future inference. The two costs are distinct but dynamically linked.

For the spherical action set, the optimal-action map $a^\star$ is locally Lipschitz away from the origin. In particular, when $\|\theta^\star\| > 0$ and \(\widehat{\theta}_t\) lies in a sufficiently small neighborhood of \(\theta^\star\), we have
\begin{align}
\left\|
a^\star(\theta^\star)
-
a^\star(\widehat{\theta}_t)
\right\|^2
\lesssim
\left\|
\widehat{\theta}_t-\theta^\star
\right\|^2.
\label{eq:optimal_action_lipschitz}
\end{align}
Equations \eqref{eq:regret_two_costs} and \eqref{eq:optimal_action_lipschitz} are the starting points of our analysis. They show that good
performance requires controlling both the quality of the estimate given as  $\|
\widehat{\theta}_t-\theta^\star
\|$ and
the cost paid to improve it. In particular, suppose that, after an initial time
$t_{\min}$, for all $ t \geq t_{\min}$,
\begin{align}
\label{intro_1}
\|\widehat{\theta}_t - \theta^\star\|^2
\lesssim \frac{1}{\sqrt{t}}, \quad
\sum_{t=t_{\min}}^T
\left\|a_t - a^\star(\widehat{\theta}_t)\right\|^2
\lesssim \sqrt{T}.
\end{align}
Then,  \eqref{eq:regret_two_costs} and \eqref{eq:optimal_action_lipschitz} yield
\begin{align}
\sum_{t=t_{\min}}^T r_{\theta^\star}(a_t)
\lesssim
\sum_{t=t_{\min}}^T
\left(
\| \widehat{\theta}_t - \theta^\star\|^2
+
\|a_t - a^\star(\widehat{\theta}_t)\|^2\right)
\lesssim
\sum_{t=t_{\min}}^T \frac{1}{\sqrt{t}}
+
\sqrt{T}
\lesssim
\sqrt{T}.
\label{final_intro}
\end{align}
For bounded $\|\theta^\star\|$, the regret accumulated before $t_{\min}$ is at most of order $t_{\min}$. Hence, the total regret remains
$\mathcal{O}(\sqrt{T})$, provided that
$t_{\min} \lesssim \sqrt{T}$, which is the optimal regret accumulation rate. Thus, \eqref{intro_1} gives us interpretable conditions that  guarantee optimal cumulative regret rate. \emph{Any} algorithm that satisfies those conditions achieves optimal regret. The remaining question is how these conditions can be attained. To this end, notice that \eqref{eq:optimal_action_lipschitz} shows that the precision cost can be controlled through the statistical error of the regularized least-squares estimator. We summarize this precision through the scalar information variable: 
\begin{align}
\Lambda_t
:=
\frac{1}{\operatorname{tr}(V_t^{-1})}.
\label{eq:information_state}
\end{align}
Let $\lambda_{\min,t}$ denote the minimum eigenvalue of $V_{t}$. Classical high-probability regret analyses (e.g., \cite{improved_algorithms_for_stochastic_bandits}) shows that $\theta^\star$ lies inside a confidence ellipsoid around the regularized least-squares estimate $\widehat{\theta}_t$, with
$\|\theta^\star-\widehat{\theta}_t\|_{V_{t-1}}^2 \lesssim \log T$ with high probability. Since
$
\operatorname{tr}(V_t^{-1})
\geq
1/\lambda_{\min,t},
$
we have
$
\lambda_{\min,t}
\geq
\Lambda_t.
$
Thus, standard confidence-ellipsoid bounds imply, up to logarithmic factors,
\begin{align}
\left\|
\widehat{\theta}_t-\theta^\star
\right\|^2
\lesssim
\frac{1}{\lambda_{\min,t-1}}
\lesssim
\frac{1}{\Lambda_{t-1}}.
\label{eq:estimation_information_bound}
\end{align}
Hence, 
\[
\Lambda_t \geq \sqrt t \implies \|\widehat{\theta}_t - \theta^\star\|^2
\lesssim \frac{1}{\sqrt{t}}.
\]

The problem of linear bandits on spherical action sets can therefore be reduced to two questions. How does an algorithm
drive the information variable $\Lambda_t$ to the critical $\sqrt{t}$ scale? How much cumulative information acquisition cost $\sum_{t=t_{\min}}^T
\|a_t - a^\star(\widehat{\theta}_t)\|^2$  does it pay to generate this information?

We show that variants of Thompson Sampling and UCB answer these questions in the same order-wise manner: they produce information increments $\Lambda_{t + 1} - \Lambda_t$ approximately of order \(\Lambda_t^{-1}\), resulting in $\Lambda_t \gtrsim \sqrt t$, \footnote{From $\Lambda_t - \Lambda_{t-1} \ge \frac{C}{\Lambda_{t-1}}$, multiply both sides by $\Lambda_t + \Lambda_{t-1}$ to obtain $\Lambda_t^2 - \Lambda_{t-1}^2 \ge \frac{C}{\Lambda_{t-1}}(\Lambda_t + \Lambda_{t-1}) \ge C$, hence $\Lambda_t^2 \ge \Lambda_{t-1}^2 + C$, and summing over $t$ gives $\Lambda_t^2 \ge \Lambda_0^2 + Ct$, which implies $\Lambda_t \gtrsim \sqrt{t}$.
} while keeping the cost of deviating from the current greedy action $\|a_t - a^\star(\widehat{\theta}_t)\|^2$ at order \(\Lambda_t^{-1}\), resulting in $\sum_t \|a_t - a^\star(\widehat{\theta}_t)\|^2 \leq \sum_t \Lambda_t^{-1} \lesssim \sqrt T $. Their implementations differ, but the information variable trajectories they generate and the prices they pay for them agree at the scale relevant for regret.

In linear bandits, however, the information increment $\Lambda_t - \Lambda_{t-1}$ is typically random. Consequently, to unify the analysis of both randomized algorithms such as TS and optimism-based deterministic algorithms such as UCB, we examine the expected growth of information quality, $\E\left[ \Lambda_t - \Lambda_{t-1}|
 \mathcal{F}_{t-1}\right]$ where $\mathcal{F}_{t-1}$ encodes the history of the algorithm up until time $t-1$. We show that, for certain TS and UCB variants, the following information growth occurs:
\begin{align}
    \E\left[ \Lambda_t -   \Lambda_{t-1}|
 \mathcal{F}_{t-1}\right] \geq   \frac{C_4^{\pi}}{\Lambda_{t-1}} \label{intro:self_correction}
\end{align}
for some $C_4^{\pi}>0$. Ignoring probabilistic considerations, the above condition mirrors the growth of the scalar dynamics,  $\Lambda_{t}- \Lambda_{t-1} \geq  C_4^{\pi}/\Lambda_{t-1}$.  We refer this condition as \emph{self-correction} as it ensures that $\Lambda_t $ self-regulates its growth rate to $\Omega(\sqrt{t})$. When $\Lambda_t$ falls below this rate, \eqref{intro:self_correction} pushes the system to explore more aggressively so as to return to the target growth rate. In Section \ref{sec:guaranteed_inference}, we show that this property is sufficient to achieve the guaranteed inference condition, $\Lambda_t \gtrsim \sqrt{t}$, with high probability after a burn-in period.

At Sections \ref{sec:ts} and \ref{sec:ucb}, we establish the self-correcting property of $\Lambda_t$ for the Thompson Sampling variant of \cite{linear_thompson_sampling_revisited} as well as the UCB algorithm of \cite{improved_algorithms_for_stochastic_bandits} after a slight modification: the confidence ellipsoid is replaced by a confidence box that contains the confidence ellipsoid. Consequently, the optimism-in-the-face-of-uncertainty principle is preserved while enabling our self-correction analysis. Hence, we provide a unified framework for analyzing two fundamental algorithmic families on spherical action sets. Through the self-correction, we obtain both regret guarantees and high-probability lower bounds on the minimum eigenvalue of the design matrix, thereby characterizing their statistical inference and regret behavior, together.

{From a technical standpoint, our main contribution is an
algorithm-agnostic proof pipeline for linear bandits. The standard route to high-probability regret bounds invokes
the optimism in the face of uncertainty principle
\cite{improved_algorithms_for_stochastic_bandits,
linear_thompson_sampling_revisited}. In this route, the algorithm picks $a_t=a^\star(\tilde\theta_t)$ such that $\|\tilde \theta_t\| \geq \|\theta^\star\|$ either with high probability \cite{improved_algorithms_for_stochastic_bandits} or with a non-negligible positive probability \cite{linear_thompson_sampling_revisited}. Notice that the optimism implies that $a^\star(\tilde \theta)^\top \tilde \theta \geq a^\star(\theta^\star)^\top \theta^\star$, hence one acts as optimistically towards a better optimal pay-off. Then the per-step regret obtained by playing
$a_t=a^\star(\tilde\theta_t)$ can be bounded by 
\[
r_{\theta^\star}(a_t)
=
\underbrace{\|\theta^\star\|-\|\tilde\theta_t\|}_{\leq 0}
+
a_t^\top(\tilde\theta_t-\theta^\star) \leq a_t^\top(\tilde\theta_t-\theta^\star),
\]
where optimism makes the first term nonpositive.
To control the remaining factor, the analysis
places the chosen or sampled
parameter $\tilde\theta_t$ in ellipsoids centered at
the regularized least-squares estimate $\widehat{\theta}_t$, which also contain $\theta^\star$ with high probability, resulting in
$\|\tilde\theta_t-\theta^\star\|_{V_{t-1}}
\lesssim \sqrt{\log T}$. Thus, 
Cauchy--Schwarz and the elliptical-potential lemma, informally presented as $
\sum_{t=1}^T\|a_t\|_{V_{t-1}^{-1}}^2 \lesssim \log(T)
$
allows us to upperbound the sum of the remaining factor:
\begin{align}
 \sum_{t=1}^T \|\tilde{\theta}_t-\widehat{\theta}_t\|_{V_{t-1}}\;\|a_t\|_{V_{t-1}^{-1}}
\le
\sqrt{\Big(\sum_{t=1}^T \|\tilde{\theta}_t-\widehat{\theta}_t\|_{V_{t-1}}^2\Big)
      \Big(\sum_{t=1}^T \|a_t\|_{V_{t-1}^{-1}}^2\Big)}  \lesssim \sqrt{T} \log(T).\end{align}
}

Our analysis uses a decomposition that does not
introduce this optimism-shortfall --$\|\theta^\star\| - \|\tilde{\theta}_t\|$-- and does not invoke the
elliptical-potential lemma. The regret decomposition done in \eqref{final_intro} holds for every played action, regardless
of how it is selected. Estimation accuracy controls
the first term, while cumulative deviation from the
estimated greedy action controls the second.
Regret can therefore be bounded without invoking
optimism. Hence, \emph{algorithms designed
or traditionally analyzed through the lens of optimism admit
a common, optimism-agnostic regret analysis!} 

At the heart of our proof framework is the self-correction property in \eqref{intro:self_correction}, which captures how an algorithm directs exploration toward poorly explored directions. Related mechanisms have been identified for a substantially modified UCB algorithm, discussed in Section~\ref{sec:ucb}, and for the randomized TS-like policies studied in \cite{abeille2025when}, which resemble the linear TS policy examined in Section~\ref{sec:ts}. The former analysis relies on the specific structure of the modified UCB algorithm and does not directly extend to simpler variants or randomized policies. The latter is closer to our approach in spirit, but its $\lambda_{\min,t}$-growth guarantee is indexed by a particular subset of rounds, rather than elapsed time, and its regret analysis continues to rely on optimism.

Specifically, the policy in \cite{abeille2025when} samples a parameter $\theta_t$ from a distribution whose geometry is determined by $V_t$ and plays $a^\star(\theta_t)$. Their analysis identifies a self-correcting mechanism that increases exploration of poorly explored directions in conditional expectation. However, the corresponding self-correction inequality applies on rounds when the conditional probability of optimism is small. For the Euclidean unit-ball action set, these are rounds satisfying
$
\mathbb P\!\left(
\|\theta_t\|_2\geq\|\theta^\star\|_2
\,\middle|\,\mathcal F_{t-1}
\right)\leq p,
$
for a fixed threshold $p\in(0,1)$. If $N_t$ denotes the number of such rounds up to time $t$, their high-probability information bound gives, schematically,
$
\lambda_{\min}(V_t)\gtrsim\sqrt{N_t},
$
up to constants, logarithmic factors, and lower-order terms. On the complementary rounds, optimism occurs with conditional probability bounded away from zero. They control regret on these rounds through an optimism-based elliptical-potential argument, as in \cite{linear_thompson_sampling_revisited}, and obtain the final regret bound by combining the two regimes.

The missing piece in the literature is a single proof pipeline that treats deterministic optimism-based algorithms and randomized posterior-sampling-type policies within the same analytical framework. We obtain such a pipeline by reducing their exploration mechanisms to the same primitive object: the self-correcting evolution of $\Lambda_t$. Rather than beginning from optimism itself, our analysis first establishes a conditional self-correction inequality, converts this inequality into a time-indexed high-probability growth guarantee for $\lambda_{\min,t}$, and then uses this information growth to control the regret. The resulting argument is therefore agnostic to whether exploration is generated deterministically through optimism or randomly through parameter sampling. A related unification of deterministic and randomized linear-bandit algorithms is provided by \cite{poful}; however, its analysis does not establish an inference-to-regret pipeline of the form developed here, and instead proceeds through the elliptical-potential route.

Beyond providing a common analysis of existing algorithms, our framework also suggests a design principle for exploration. It requires only sufficient information growth together with controlled cumulative deviation from the estimated greedy action. This flexibility allows us to construct and tune exploration rules directly around these two requirements. We illustrate this point through UCB-type variants in which the set over which the OFUL principle is applied need not coincide with the usual horizon-dependent confidence set, and therefore need not be large enough to guarantee optimism with high probability. Instead, the exploration radius can be chosen to preserve the self-correction and concentrated-exploitation properties identified by our framework. The finite-horizon experiments in Section~\ref{sec:numerics} illustrate the potential benefit of this perspective: a UCB variant chosen without enforcing optimism exhibits information-growth and regret behavior remarkably similar to Thompson sampling in the experiments considered below. This observation is consistent with the well-known empirical advantage that randomized exploration can have over theoretically calibrated optimistic methods, and suggests that our framework may help narrow this practical gap by enabling the design of deterministic, optimism-agnostic exploration rules whose behavior is governed directly by the inference and exploitation requirements needed for low regret.

Concretely, we establish: (i) a modular, algorithm-agnostic framework for linear bandits over spherical action sets that decomposes regret into the cost of insufficient statistical precision, controlled by the design matrix, and the cost paid to acquire additional information; (ii) an information-growth property and a complementary exploitation property that, when combined, yield order-optimal high-probability regret bounds; (iii) that variants of UCB and Thompson sampling satisfy these two properties, providing a unified, optimism-agnostic analysis of deterministic and randomized algorithms; and (iv) that the same sufficient conditions can be used as a design principle for exploration rules that need not be calibrated to preserve optimism.

\section{A Modularized Framework for Regret Bounds}
\label{sec:framework}

In this section, we show that under certain regularity conditions on the linear bandit environment, if an algorithm satisfies two complementary properties, then it achieves optimal regret.

We begin by defining notation. For $a \in \Rset^n$ and $m > 0$, let $\Bcal_a(m)$ denote the ball of radius $m$ centered at $a$.
Let $\{\Fcal_t\}_{t=0}^T$ denote a filtration, where $\Fcal_{t-1}$ is the
$\sigma$-algebra generated by past actions and observations before time $t$, and $\Fcal_0$ is the
trivial $\sigma$-algebra. We write $\E_{t-1}[\cdot] := \E[\cdot \mid \Fcal_{t-1}]$ and $\operatorname{var}_{t-1}[\cdot] := \operatorname{var}[\cdot \mid \Fcal_{t-1}]$. For a symmetric matrix $A \in \Rset^{n \times n}$,
let $\{\lambda_i(A)\}_{i=1}^n$ be its ordered eigenvalues such that $\lambda_i(A) \leq \lambda_{i + 1}(A)$ and $\{v_i(A)\}_{i=1}^n$ the associated
orthonormal eigenbasis. Specifically, we denote $\lambda_{1}(V_t)$ as $\lambda_{\min,t}$ and $v_1(V_t)$ as $v_{\min,t}$. We use the weighted norm $\|z\|_Q := \sqrt{z^\top Q z}$ for
$z \in \Rset^n$ and $Q \succeq 0$. We record here a result that will be useful in the subsequent analysis.
\begin{lemma}
For any two vectors $x\neq0,y\neq0$, $\left\|
    \frac{x}{\|x\|}
    -
    \frac{y}{\|y\|}
\right\| \leq \frac{2\|x-y\|}{\|y\|}$.
\label{lemma:xy}
\end{lemma}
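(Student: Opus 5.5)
The plan is to insert an intermediate point and apply the triangle inequality. We compare both normalized vectors against $x/\|y\|$, so the difference splits into two pieces, each controlled by $\|x-y\|/\|y\|$.

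First, write
\[
\frac{x}{\|x\|}-\frac{y}{\|y\|}
=\left(\frac{x}{\|x\|}-\frac{x}{\|y\|}\right)+\left(\frac{x}{\|y\|}-\frac{y}{\|y\|}\right).
\]
The second term has norm exactly $\|x-y\|/\|y\|$.

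For the first term, factor out $x$:
\[
\left\|\frac{x}{\|x\|}-\frac{x}{\|y\|}\right\|
=\|x\|\,\frac{\bigl|\|y\|-\|x\|\bigr|}{\|x\|\,\|y\|}
=\frac{\bigl|\|y\|-\|x\|\bigr|}{\|y\|}.
\]
The reverse triangle inequality gives $\bigl|\|y\|-\|x\|\bigr|\le\|x-y\|$, so this term is also at most $\|x-y\|/\|y\|$. Adding the two bounds yields the claimed constant $2$.

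There is no real obstacle here; the only thing that matters is the choice of intermediate point. Pivoting through $x/\|y\|$ rather than $y/\|x\|$ is what leaves $\|y\|$, and not $\|x\|$, in the denominator, as the statement requires. The hypotheses $x\neq 0$ and $y\neq 0$ are needed only to make the normalizations well defined. This is the form needed for \eqref{eq:optimal_action_lipschitz} with $y=\theta^\star$.
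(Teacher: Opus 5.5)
Your proof is correct and is essentially identical to the paper's: the paper also inserts the intermediate point $x/\|y\|$, rewrites the first piece as $\bigl|\|y\|-\|x\|\bigr|/\|y\|$, and bounds it by $\|x-y\|/\|y\|$ via the reverse triangle inequality. Adding the second piece, $\|x-y\|/\|y\|$, gives the constant $2$.
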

\begin{proof} The triangle inequality gives
        \begin{align}
        \begin{aligned}
\left\|
    \frac{x}{\|x\|}
    -
    \frac{y}{\|y\|}
\right\|
&\leq
\left\|
    \frac{x}{\|x\|}
    -
    \frac{x}{\|y\|}
\right\|
+
\left\|
    \frac{x-y}{\|y\|}
\right\| =
\frac{\bigl|\|y\|-\|x\|\bigr|}{\|y\|}
+
\frac{\|x-y\|}{\|y\|} \leq
\frac{2\|x-y\|}{\|y\|}.
\end{aligned}
    \end{align}
\end{proof}
With this notation in hand, we now begin by precisely stating the assumptions on the linear bandit environment.
\begin{assumption}\label{assumption_1}
The linear bandit environment satisfies:
\begin{enumerate}[leftmargin=*,nolistsep,noitemsep]
    \item Parameter bounds:
    $0 < \theta_{\min} := \inf_{\theta \in \Theta} \|\theta\| \leq
    \theta_{\max} := \sup_{\theta \in \Theta} \|\theta\| < \infty$.
    \item Noise process: $\varepsilon_t$ is i.i.d., zero-mean, and $M$-sub-Gaussian for some $M>0$, with $\E_{t-1}[Y_t\mid a_t]=\theta^{\star\top}a_t.$ 
    \item Action set: $\Acal = \{ x \in \Rset^n : \|x\| =1 \}$.
\end{enumerate}
\end{assumption}
In this bandit environment, our goal is to define two properties that guarantee a regret upper bound. Next, we delineate these properties. In the sequel, we use the notation,
\begin{align}
    M_T := \lceil  \sqrt{T}\log(T) \rceil, \quad m_T= \left\lceil \frac{1}{3} \sqrt{T}\log(T) \right\rceil.
\end{align}
We ignore the integrality of these time instances.
\begin{property}[Guaranteed inference]\label{prop_1}
Under a control policy, there exists a positive constant $C_1$ independent of $T$ such that
$\|\widehat{\theta}_{t+1} - \theta^\star\|^2 \leq {C_1\log(T)}/{ \sqrt{t}}$ for all $ t \in [M_T,T]$ w.p. at least $1-\frac{1}{2T}$ for a large enough $T$.
\end{property}
\begin{property}[Concentrated Explorations]\label{prop_2}
Under a control policy, there exists a positive constant $C_2$ independent of $T$ such that $\sum_{t = M_T + 1}^T \|a_t - a^\star(\widehat{\theta}_t)\|^2
\leq   C_2\sqrt{T}\log(T)$ w.p. at least $1-\frac{1}{2T}$ for a  large enough $T$.
\end{property}
Properties~\ref{prop_1} and~\ref{prop_2} lead to a high-probability order-optimal regret accumulation in the linear bandit environment of Assumption \ref{assumption_1}, as stated next.
\begin{theorem}\label{theorem_unified_regret_bound}
Suppose Assumption \ref{assumption_1} holds. Then, for any control policy that satisfies Properties \ref{prop_1} and \ref{prop_2}, there exists a constant $C_3 > 0$ independent of $T$  such that
\begin{align}
\probb{\mathscr{R}_{\theta^{\star}}(T) \leq C_3 \sqrt{T}\log(T)} \geq 1 -  \frac{1}{T}
\end{align}
for a large enough $T$.
\end{theorem}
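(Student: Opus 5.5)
We intersect the two high-probability events and bound regret deterministically on that intersection. Let $E_1$ be the event of Property~\ref{prop_1} and $E_2$ the event of Property~\ref{prop_2}. A union bound gives $\probb{E_1\cap E_2}\geq 1-\frac{1}{2T}-\frac{1}{2T}=1-\frac1T$ for $T$ large enough. We then split the regret $\mathscr{R}_{\theta^\star}(T)$ into a burn-in part, $t\le M_T+1$, and a post-burn-in part, $t\ge M_T+2$, so that every estimate $\widehat\theta_t$ used in the second part is $\widehat\theta_{s+1}$ with $s=t-1\ge M_T$.

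\textbf{Burn-in.} By Assumption~\ref{assumption_1}, for unit vectors $a$ we have $r_{\theta^\star}(a)=\|\theta^\star\|(1-a^\star(\theta^\star)^\top a)\le 2\theta_{\max}$. Hence the first $M_T+1$ rounds contribute at most $2\theta_{\max}(M_T+1)\le 2\theta_{\max}(\sqrt T\log T+2)$.

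\textbf{Post-burn-in.} For $t\ge M_T+2$ we use the spherical identity \eqref{eq:spherical_regret_geometry} together with $\|x+y\|^2\le 2\|x\|^2+2\|y\|^2$:
\[
r_{\theta^\star}(a_t)\le \theta_{\max}\left(\|a^\star(\theta^\star)-a^\star(\widehat\theta_t)\|^2+\|a^\star(\widehat\theta_t)-a_t\|^2\right).
\]
On $E_2$, the second terms sum to at most $C_2\sqrt T\log T$.

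For the first term we apply Lemma~\ref{lemma:xy} with $y=\theta^\star$ and $x=\widehat\theta_t$. This requires $\widehat\theta_t\neq0$, which holds on $E_1$ once $C_1\log T/\sqrt{t-1}<\theta_{\min}^2$; that is true for all $t\ge M_T+2$ when $T$ is large, because $\log T/\sqrt{M_T}\to0$. The lemma then gives
\[
\|a^\star(\theta^\star)-a^\star(\widehat\theta_t)\|^2\le \frac{4\|\widehat\theta_t-\theta^\star\|^2}{\theta_{\min}^2}\le \frac{4C_1\log T}{\theta_{\min}^2\sqrt{t-1}}.
\]
Summing with $\sum_{s\le T}s^{-1/2}\le 2\sqrt T$ gives $\frac{8C_1}{\theta_{\min}^2}\sqrt T\log T$. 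The degenerate case $\widehat\theta_t=0$ is excluded on $E_1$ for large $T$, so the arbitrary choice of $a^\star(0)$ never enters.

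\textbf{Conclusion.} Adding the three contributions yields, on $E_1\cap E_2$,
\[
\mathscr{R}_{\theta^\star}(T)\le C_3\sqrt T\log T,\qquad C_3=2\theta_{\max}+1+\theta_{\max}\left(C_2+\frac{8C_1}{\theta_{\min}^2}\right),
\]
for large $T$, where the $+1$ absorbs the additive constant from the burn-in bound.

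The main obstacle is bookkeeping rather than depth. Property~\ref{prop_2} starts its sum at $M_T+1$, while the Lipschitz bound needs the index of $\widehat\theta$ to be at least $M_T+1$. Pushing the one boundary round into the burn-in fixes this; an alternative is to bound $\|a^\star(\theta^\star)-a^\star(\widehat\theta_{M_T+1})\|^2\le4$ directly. One must also confirm that $\widehat\theta_t$ stays away from the origin so that Lemma~\ref{lemma:xy} applies, which the argument above handles on $E_1$.
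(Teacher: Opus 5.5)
Your proof is correct and follows the paper's argument essentially step for step: a union bound over the two property events, the crude $2\theta_{\max}$ per-round bound on the burn-in, the spherical identity combined with $\|x+y\|^2\le 2\|x\|^2+2\|y\|^2$, Lemma~\ref{lemma:xy} with $y=\theta^\star$ after checking $\widehat\theta_t\neq 0$, and the bound $\sum_t t^{-1/2}\le 2\sqrt T$. The one-round shift of the split point is harmless but unnecessary, because Property~\ref{prop_1} at $t=M_T$ already controls $\widehat\theta_{M_T+1}$, so the paper simply sums from $t=M_T+1$.
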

\begin{proof}
Consider the event in which Properties ~\ref{prop_1} and
\ref{prop_2} hold. Union bound yields
that it holds with probability at least $1-\frac{1}{T}$. We work on this high-probability event. From \eqref{eq:spherical_regret_geometry}, we have that for every $a\in\Acal$,
\begin{align}
\begin{aligned}
r_\theta(a) = 
\frac{\|\theta\|}{2}
\|a^\star(\theta)-a\|^2,
\end{aligned}
\end{align}
which further implies that $r_{\theta^\star}(a)\leq 2\theta_{\max}$. Therefore,
\begin{align} \label{first+_reg_decomp}\mathscr R_{\theta^\star}(T)
\leq
2\theta_{\max} M_T
+
\frac{\theta_{\max}}{2}
\sum_{t=M_T+1}^T
\|a_t-a^\star(\theta^\star)\|^2.
\end{align}

We bound the sum using the Lipschitz continuity of $a^\star$.  Lemma \ref{lemma:xy} implies that when $\widehat \theta_t \neq 0$, we have
\begin{align}\label{lip}
\|a^\star(\widehat\theta_t)-a^\star(\theta^\star)\|
\leq \frac{2}{\|\theta^\star\|}
\|\widehat\theta_t-\theta^\star\| \leq 
\frac{2}{\theta_{\min}}
\|\widehat\theta_t-\theta^\star\|.
\end{align}

Property \ref{prop_1} gives
$
\|\widehat\theta_t-\theta^\star\|^2
\lesssim
\log(T)/\sqrt{t-1}$.
Thus, for all sufficiently large $T$ and $t \geq M_T+1$ we have
$\|\widehat\theta_t-\theta^\star\|<\theta_{\min}/2$ and hence
$\widehat\theta_t\neq0$. Thus, \eqref{lip} holds for all $t \geq M_T+1$ which when combined with $\|a_t-a^\star(\theta^\star)\|^2
\leq
2\|a_t-a^\star(\widehat\theta_t)\|^2
+
2\|a^\star(\widehat\theta_t)-a^\star(\theta^\star)\|^2$ and \eqref{first+_reg_decomp} gives
\begin{align}
     \begin{aligned}
\mathscr R_{\theta^\star}(T)
\leq
2\theta_{\max}M_T
+
\theta_{\max}
\sum_{t=M_T+1}^T
\|a_t-a^\star(\widehat\theta_t)\|^2 + 
\frac{4\theta_{\max}}{\theta_{\min}^2}
\sum_{t=M_T+1}^T
\|\widehat\theta_t-\theta^\star\|^2,
\end{aligned}\label{regret_proof_decomposition}
\end{align}
where the first summation is upper bounded via Property \ref{prop_2} and the second summation is upper bounded via Property \ref{prop_1} and an integral bound 
\begin{align}
\begin{aligned}
\sum_{t=M_T+1}^T
\|\widehat\theta_t-\theta^\star\|^2
\leq
C_1\log(T)
\sum_{t=M_T+1}^T\frac{1}{\sqrt{t-1}}\leq
2C_1\sqrt{T}\log(T).
\end{aligned}
\end{align}
Since $M_T \lesssim \sqrt{T}\log(T)$, all terms in \eqref{regret_proof_decomposition} are of order $\mathcal{O}(\sqrt{T}\log(T))$, which completes the proof.
\end{proof}

\section{The Inference Dynamics}
\label{sec:guaranteed_inference}

We have identified two properties thus far that are sufficient for
order-optimal regret. We now give a condition on the evolution of a certain statistic derived from the
design matrix that implies Property~\ref{prop_1} and is amenable to
verification for algorithms such as linear TS and UCB. Recall that $V_t=\lambda I+\sum_{s=1}^t a_sa_s^\top$ and
$\Lambda_t:=1/\operatorname{tr}(V_t^{-1})$. We use the following self-normalized concentration result from
\cite[Theorem~2]{improved_algorithms_for_stochastic_bandits} after modifying into our
notation.  It states that under Assumption~\ref{assumption_1}, for any $\delta\in(0,1)$,
with probability at least $1-\delta$, for all $t\geq 1$,
\begin{align}
\left\|\widehat\theta_{t+1}-\theta^\star\right\|_{V_t}
\leq
M\sqrt{
    n\log\left(
        \frac{1+t/\lambda}{\delta}
    \right)
}
+
\sqrt{\lambda}\,\theta_{\max}
\label{eq:self_normalized_AY}
\end{align}
for any sequence of bounded actions.
With $\delta=1/(4T)$ and using $t\leq T$ with
$\left\|\widehat\theta_{t+1}-\theta^\star\right\|_{V_t}^2
\geq
\lambda_{\min,t}
\left\|\widehat\theta_{t+1}-\theta^\star\right\|^2$
gives
\begin{align}
\label{eq:self_normalized_concentration}
\probb{
    \vnorm{\widehat{\theta}_{t+1}-\theta^\star}^2
    \leq
    \rho_T^2\lambda_{\min,t}^{-1}, \; t = 1,\ldots, T
}
\geq
1-\frac{1}{4T},
\end{align}
where 
\begin{align}
\rho_T
:=
M\sqrt{
    n\log\left(
        4T(1+T/\lambda)
    \right)
}
+
\sqrt{\lambda}\,\theta_{\max}
=
\mathcal O(\sqrt{\log T}).
\label{eq:rhoT.def}
\end{align}
Since $\Lambda_t\leq\lambda_{\min,t}$, a lower bound of the form
$\Lambda_t\gtrsim\sqrt t$ is sufficient to obtain a guaranteed inference quality of
$\vnorm{\widehat{\theta}_{t+1}-\theta^\star}^2
\lesssim\log(T)/\sqrt t$, as required by Property~\ref{prop_1}.
We therefore ask when a policy generates such growth. A path-wise
condition of the form
$\Lambda_t-\Lambda_{t-1}\gtrsim\Lambda_{t-1}^{-1}$ would suffice, since
it produces constant-order growth of $\Lambda_t^2$. We impose a weaker counterpart that the same holds \emph{in expectation}.
\begin{property}[Self-Correction for Inference]
\label{prop_3}
Under a control policy, 
\begin{align}
\E\!\left[\Lambda_t\mid\Fcal_{t-1}\right] \mathbb{I}_{\mathcal{G}_{t-1}}
&\geq
\left(\Lambda_{t-1}
+
\frac{C_4}{\Lambda_{t-1}}\right)\mathbb{I}_{\mathcal{G}_{t-1}}, \quad t=m_T+1,\ldots,T,
\label{eq:prop3_drift}
\end{align}
where 
$
\mathcal G_{m_T}
\supseteq
\mathcal G_{m_T+1}
\supseteq
\cdots
\supseteq
\mathcal G_{T-1},
$ with $\mathcal G_{t-1}\in\mathcal F_{t-1}$, $t=m_T+1,\ldots,T$, and 
$\probb{\mathcal G_{T-1}
}
\geq
1-\frac{1}{8T}$, for large $T$ for some $T$-independent positive constant
$C_4$.
\end{property}

If the inequality holds almost surely, one may take
$\mathcal G_{t-1}=\Omega$.  If not, our proof requires that the growth condition on $\Lambda_t$ holds ``nearly'' always, making precise how close to unity is sufficient. 
It only controls the rate at which information
accumulates under the policy. We call $\Lambda_t$ self-correcting in that its per-step growth is controlled by both $\Lambda_{t-1}$ and its reciprocal. At least one of them is large enough to fuel the per-step growth in $\Lambda_t$. Property~\ref{prop_3} does not require $a_t$ to be close to
$a^\star(\widehat{\theta}_t)$ and therefore does not imply
Property~\ref{prop_2}.

In our proofs for linear TS and UCB, we take $\mathcal G_{t-1}$ to be the event that the least-squares estimator remains in a \emph{fixed} neighborhood of $\theta^\star$,
\begin{align}\label{g_tdefined_1}
\mathcal G_{t-1}
:=
\left\{
\|\widehat\theta_s-\theta^\star\|
\leq
\frac{\theta_{\min}}{2},
\quad s = m_T+1, \ldots, t
\right\}.
\end{align}
Notice that $\lambda_{\min, m_T}\geq(\log T)^2$ together with $V_t\succeq V_{m_T}$ for $t\geq m_T + 1$, implies
\begin{align}\label{gt_defined_2}
\left\|\widehat\theta_{t+1}-\theta^\star\right\|_{V_t}^2
\geq
\lambda_{\min,t}
\left\|\widehat\theta_{t+1}-\theta^\star\right\|^2
\geq
(\log T)^2
\left\|\widehat\theta_{t+1}-\theta^\star\right\|^2.
\end{align}
Hence, \eqref{eq:self_normalized_AY} with $\delta = 1/(8T)$ gives  
\begin{align}\label{g_tdefined_3}
\probb{\mathcal G_{T - 1} } \geq 1 - \frac{1}{8T},
\end{align}
whenever $\lambda_{\min,m_T}\geq(\log T)^2$ for a large enough $T$, which is a mild requirement typically satisfied by algorithms, or can be explicitly encoded through pure exploration over a burn-in period of length $m_T = o(T)$; see Section \ref{sec:ts}. Our analyses of linear TS and UCB will proceed over this good event. The next result captures how self-correction in Property~\ref{prop_3} produces the desired
$\sqrt t$ growth uniformly after the burn-in period.

\begin{theorem}
\label{theorem_unified_inference_bound}
Suppose Assumption~\ref{assumption_1} holds. If a control policy 
satisfies Property~\ref{prop_3}, then for sufficiently large $T$,
\begin{align}
\mathbb P\left\{
\lambda_{\min,t}\geq\Lambda_t
\geq\frac12\sqrt{C_4t},
\quad t = M_T,\ldots,T\}
\right\}
\geq 1-\frac1{4T},
\label{eq:thm.inference}
\end{align}
and the policy satisfies Property~\ref{prop_1}.
\end{theorem}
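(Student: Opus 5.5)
The plan is to work with the increments $D_t:=\Lambda_t-\Lambda_{t-1}$ directly rather than with $\Lambda_t^2$. A martingale on $\Lambda_t^2$ has increments of size $\Lambda_{t-1}\sim\sqrt t$, so Azuma would give deviations of order $t^{3/2}$, which is too large. Instead I will combine the drift in Property~\ref{prop_3} with a variance-sensitive (Freedman) concentration bound for the centered increments. The bound $\lambda_{\min,t}\geq\Lambda_t$ is immediate from $\operatorname{tr}(V_t^{-1})\geq 1/\lambda_{\min,t}$.

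\emph{Step 1: bounded nonnegative increments.} Since $V_t\succeq V_{t-1}$, we have $\operatorname{tr}(V_t^{-1})\le\operatorname{tr}(V_{t-1}^{-1})$, hence $D_t\ge0$. I also want $D_t\le c$ for an absolute constant $c$ (I expect $c=1$). To show this, I will use that $V\mapsto 1/\operatorname{tr}(V^{-1})$ is concave, positively $1$-homogeneous and monotone, and that $a_ta_t^\top$ has trace $1$. Concretely, I would use the Sherman--Morrison identity
\begin{align*}
\operatorname{tr}(V_{t-1}^{-1})-\operatorname{tr}(V_t^{-1})=\frac{\|V_{t-1}^{-1}a_t\|^2}{1+\|a_t\|^2_{V_{t-1}^{-1}}},
\end{align*}
and then bound $\Lambda_{t-1}\Lambda_t$ times this quantity. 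This is a routine but fiddly computation, and I treat it as the first technical check.

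\emph{Step 2: Freedman on the good event.} Define the stopped increments $\xi_t:=(D_t-\E_{t-1}[D_t])\mathbb I_{\mathcal G_{t-1}}$ for $t\ge m_T+1$. This is legitimate because $\mathcal G_{t-1}\in\Fcal_{t-1}$ and the $\mathcal G$'s are nested. By Step 1, $|\xi_t|\le c$ and $\operatorname{var}_{t-1}[\xi_t]\le \E_{t-1}[D_t^2]\le c\,\E_{t-1}[D_t]$. Set $S_t:=\sum_{s=m_T+1}^t\E_{s-1}[D_s]$. Freedman's inequality, combined with a peeling/union bound over $t\le T$ and over dyadic values of $S_t$, with per-$t$ failure probability $1/(8T^2)$, gives on an event of probability at least $1-\frac1{8T}$:
\begin{align*}
\sum_{s=m_T+1}^t D_s\;\ge\; S_t-C\bigl(\sqrt{S_t\log T}+\log T\bigr),\qquad t=m_T+1,\ldots,T,
\end{align*}
on $\mathcal G_{T-1}$. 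Intersecting with $\mathcal G_{T-1}$, whose probability is at least $1-\frac1{8T}$, gives total failure probability at most $\frac1{4T}$.

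\emph{Step 3: contradiction argument.} Fix $t\in[M_T,T]$ and suppose $\Lambda_t<\frac12\sqrt{C_4t}$. Because $\Lambda$ is nondecreasing, $\Lambda_{s-1}<\frac12\sqrt{C_4t}$ for every $s\le t$. Property~\ref{prop_3} then gives $\E_{s-1}[D_s]\ge 2\sqrt{C_4}/\sqrt t$, so
\begin{align*}
S_t\;\ge\; 2\sqrt{C_4}\,\frac{t-m_T}{\sqrt t}\;\ge\;\tfrac43\sqrt{C_4t},
\end{align*}
using $t\ge M_T=3m_T$. Since $x\mapsto x-C(\sqrt{x\log T}+\log T)$ is eventually increasing, Step 2 yields
\begin{align*}
\Lambda_t\;\ge\;\tfrac43\sqrt{C_4t}-\mathcal O\bigl(t^{1/4}\sqrt{\log T}+\log T\bigr).
\end{align*}
For $t\ge M_T\gtrsim\sqrt T\log T$, the error term is $o(\sqrt t)$, so $\Lambda_t>\frac12\sqrt{C_4t}$ for large $T$. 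This contradicts the assumption and proves \eqref{eq:thm.inference}.

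\emph{Step 4: Property~\ref{prop_1}.} Intersect the event of \eqref{eq:thm.inference} with the event in \eqref{eq:self_normalized_concentration}, which has probability at least $1-\frac1{4T}$. On this intersection, for $t\in[M_T,T]$,
\begin{align*}
\|\widehat\theta_{t+1}-\theta^\star\|^2\le\rho_T^2/\lambda_{\min,t}\le 2\rho_T^2/\sqrt{C_4t}.
\end{align*}
Since $\rho_T^2=\mathcal O(\log T)$ by \eqref{eq:rhoT.def}, this is $C_1\log T/\sqrt t$, and the union bound gives probability at least $1-\frac1{2T}$.

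I expect the main obstacle to be Step 2. One must apply Freedman's inequality so that the deviation scales with $\sqrt{S_t}$ rather than $\sqrt t$; plain Azuma gives $\sqrt{t\log T}$, which is of the same order as the drift and does not close the argument. One must also handle the random, $t$-dependent variance proxy via peeling and correctly absorb the stopping by $\mathcal G$. Step 1's constant bound on $D_t$ is the secondary check.
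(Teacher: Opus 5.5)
Your proof is correct, but it organizes the concentration step differently from the paper.

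\textbf{The paper's route.} The paper fixes $t\in[M_T,T]$ and works with $W_s=(D_s-\E_{s-1}[D_s])J_s$, where $J_s=\mathbb{I}_{\{\Lambda_{s-1}<\frac12\sqrt{C_4t}\}}\mathbb{I}_{\mathcal G_{s-1}}$ builds the threshold itself into the truncation. A first-crossing argument, together with $D_s\le 1$, caps the predictable variance at the crossing time at the deterministic level $\frac54\sqrt{C_4t}+2$. One application of the maximal Freedman inequality then gives $\probb{\Lambda_t<\frac12\sqrt{C_4t},\ \mathcal G_{t-1}}\le\exp(-\frac18\sqrt{C_4t})$, and a union bound over $t$ finishes.

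\textbf{Your route.} You truncate only by $\mathcal G_{s-1}$ and absorb the random variance proxy by peeling. This gives a single event, uniform in $t$, on which $\sum_s D_s\ge S_t-C(\sqrt{S_t\log T}+\log T)$. Everything after that is a deterministic contradiction argument.

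\textbf{Comparison.} Both arguments rest on the same fact: $0\le D_s\le 1$, hence $\operatorname{var}_{s-1}(D_s)\le\E_{s-1}[D_s]$, so fluctuations are of the order of the square root of the accumulated drift. The paper's route needs no peeling and loses no logarithm. It also gives per-$t$ failure probabilities exponentially small in $\sqrt t$. The price is the somewhat delicate stopping-time bookkeeping needed to bound $\sum_s D_sJ_s$. Your route is more modular, since one event serves every $t$ and any threshold. Its $\log T$ loss is harmless because $S_t\gtrsim\sqrt{M_T}\gtrsim T^{1/4}\sqrt{\log T}$ on the relevant event.

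\textbf{Step 1.} The quickest check is the paper's. Use $V_t\preceq V_{t-1}+I$, and note that $x\mapsto 1/\operatorname{tr}((V_{t-1}+xI)^{-1})$ has derivative at most $1$, because $\sum_i b_i^2\le(\sum_i b_i)^2$ for $b_i\ge0$. Monotonicity then gives $D_t\le 1$.

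Your Sherman--Morrison route also works. Write $\tau=\operatorname{tr}(V_{t-1}^{-1})$ and let $\beta$ be the trace decrement, so $D_t=\beta/(\tau(\tau-\beta))$. Then $D_t\le1$ reduces to $\beta(1+\tau)\le\tau^2$. This follows from $a_t^\top V_{t-1}^{-2}a_t\le\lambda_{\max}(V_{t-1}^{-1})\,a_t^\top V_{t-1}^{-1}a_t$ and $a_t^\top V_{t-1}^{-1}a_t\le\lambda_{\max}(V_{t-1}^{-1})\le\tau$.

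\textbf{Minor slip.} In your motivating remark, Azuma with increments of size $\sqrt t$ gives deviations of order $t\sqrt{\log T}$, not $t^{3/2}$. Your conclusion still holds: this swamps the order-$t$ drift of $\Lambda_t^2$.
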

{\begin{proof}
The inequality $\lambda_{\min,t} \geq \Lambda_t$ always holds. For the rest, fix $t \in [M_T, T]$ 
and define 
\begin{align}
    D_s &:=\Lambda_s-\Lambda_{s-1},
    \\
    J_s &:= \mathbb{I}_{\{\Lambda_{s-1} < \frac{1}{2}\sqrt{C_4 t}\}} \mathbb{I}_{\mathcal{G}_{s-1}},
    \\
    W_s &:=(D_s - \mathbb{E}_{s-1}[D_s]) J_s
\end{align}
for $s=m_T+1, \ldots, t$. Then, $W$'s form a martingale difference sequence adapted to $\{\mathcal{F}_s\}_{s \geq m_T}$.  Using martingale concentration on $W$, we will show that 
\begin{align}
    \probb{ \underbrace{ \Lambda_t < \frac{1}{2}\sqrt{C_4 t}  \text{ and } \mathcal{G}_{t-1} }_{:=\mathcal{H}_t}}  
    \leq 
    \exp \left(- \frac{1}{8} \sqrt{C_4 t}\right)
    \label{eq:reqd.3.2}
\end{align}
for $t = M_T, \ldots, T$ and a large enough $T$. Since $\mathcal{G}_{ T-1} \subseteq \mathcal{G}_{t-1}$ for all $t \leq T$, a union bound over $t \in [M_T, T]$ using the above relation yields
\begin{align}
\begin{aligned}
    \probb{\exists t \in [M_T, T] : \Lambda_t < \frac{1}{2}\sqrt{C_4 t}} 
    &\leq 
    \probb{\mathcal{G}_{T-1}^c} + \sum_{t=M_T}^T \probb{\Lambda_t < \frac{1}{2}\sqrt{C_4 t} \text{ and } \mathcal{G}_{T-1}} \\
    &\leq 
    \probb{\mathcal{G}_{T-1}^c} + \sum_{t=M_T}^T \probb{\mathcal{H}_t} \\
    &\leq \frac{1}{8T} + T \exp \left(- \frac{1}{8} \sqrt{C_4 M_T}\right) \le \frac{1}{4T} 
\end{aligned}
\end{align}
for sufficiently large $T$, because $M_T \asymp \sqrt{T} \log(T)$. This implies \eqref{eq:thm.inference}. Property \ref{prop_1} follows from \eqref{eq:thm.inference} via the following argument.

From \eqref{eq:rhoT.def}, $\rho_T^2\leq c_\rho\log T$ for a $T$-independent constant $c_\rho > 0$ for large $T$. On the event in \eqref{eq:thm.inference}, 
$\lambda_{\min,t}\geq\frac{1}{2}\sqrt{C_4 t}$ for every
$t=M_T,\ldots,T$, and it has probability of at least $1-\frac{1}{4T}$. On the event
in \eqref{eq:self_normalized_concentration}, $\vnorm{\widehat{\theta}_{t+1}-\theta^\star}^2
\leq
\rho_T^2\lambda_{\min,t}^{-1}$ $t=M_T,\ldots,T$ and it has a probability of at least $1-\frac{1}{4T}$. Hence, on their intersection, we have
\begin{align}
\begin{aligned}
\vnorm{\widehat{\theta}_{t+1}-\theta^\star}^2
\leq
\rho_T^2\lambda_{\min,t}^{-1}\leq
\frac{2c_\rho}{\sqrt{C_4}}
\frac{\log T}{\sqrt t},
\end{aligned}
\end{align}
which has probability of at least $1-\frac{1}{2T}$.
Therefore, Property \ref{prop_1} holds with
$C_1:=2c_\rho/\sqrt{C_4}$.

It remains to establish \eqref{eq:reqd.3.2}, which we accomplish using the following steps:
\begin{enumerate}
    \item We relate the left-hand side of \eqref{eq:reqd.3.2} to $W$'s and show that 
    \begin{align}
        \probb{ \mathcal{H}_t} 
        \leq \probb{-\sum_{s=m_T+1}^t W_s \geq \frac{3}{4} \sqrt{C_4 t}}.
        \label{eq:Et.Ws}
    \end{align}
    \item Next, we use a time-uniform Freedman inequality to bound the right-hand-side of \eqref{eq:Et.Ws} with the right-hand-side of \eqref{eq:reqd.3.2}.
\end{enumerate}

\noindent $\bullet$ \emph{Step 1. Proving \eqref{eq:Et.Ws}.}

Fix $t \in [M_T, T]$. In Loewner order, $V_{s-1} \leq V_{s-1} + a_s a_s^\top = V_{s}$. The map $A\mapsto 1/\operatorname{tr}(A^{-1})$ is monotone in that order, and hence, $\Lambda_s$ is non-decreasing, implying $D_s \geq 0$. Consider the event $\mathcal{H}_t$. Because $\Lambda_s$ is non-decreasing, $\Lambda_{s-1} \leq \Lambda_t < \frac{1}{2}\sqrt{C_4 t}$ for all $s \in \{m_T+1, \dots, t\}$. Thus, $J_s = 1$ identically on this event. On $\mathcal{H}_t$, we have
\begin{align}
    \Lambda_t - \Lambda_{m_T} = \sum_{s=m_T+1}^t W_s
    + \sum_{s=m_T+1}^t \E_{s-1}[D_s].
    \label{eq:W.telescope}
\end{align}
Since $\mathcal{H}_t\subseteq \mathcal{G}_{ t-1}$,
Property~\ref{prop_3} implies that for every $s=m_T+1,\ldots,t$,
$\mathbb{E}_{s-1}[D_s]\geq C_4/\Lambda_{s-1} \geq C_4/\Lambda_{t}$.
Also, $m_T/M_T \approx 1/3$, and hence, $m_T\leq 3M_T/8\leq 3t/8$ for a large enough $T$. These observations taken together implies the following lower bound on $\sum_{s=m_T+1}^t \E_{s-1}[D_s]$ on $\mathcal{H}_t$.
\begin{align}
    \sum_{s=m_T+1}^t \E_{s-1}[D_s]
    \geq (t - m_T)\frac{C_4}{\frac{1}{2}\sqrt{C_4 t}}
    \geq \frac{5}{8}t \cdot \frac{2\sqrt{C_4}}{\sqrt{t}}
    = \frac{5}{4}\sqrt{C_4 t},
\end{align}
Consequently, \eqref{eq:W.telescope} yields
\begin{align}
-\sum_{s=m_T+1}^t W_s
&= \sum_{s=m_T+1}^t \E_{s-1}[D_s] + \Lambda_{m_T} -     \Lambda_t
\geq
\frac{5}{4}\sqrt{C_4 t} - \frac{1}{2}\sqrt{C_4 t} = \frac{3}{4}\sqrt{C_4 t},
\end{align}
proving \eqref{eq:Et.Ws}.

\noindent $\bullet$ \emph{Step 2. Martingale Concentration of $\sum_{s=m_T+1}^t W_s$.}
We start by bounding $|W_s|$. We have
\begin{align}
     V_s = V_{s-1} + a_s a_s^\top  \leq V_{s-1} + \vnorm{a_s}^2 I.
\end{align}
The scalar function $g(x) := 1/\operatorname{tr}((V_{s-1} + x I)^{-1})$ is increasing in $x \geq 0$ with $g'(x) \leq 1$, meaning that 
$D_s\leq g(\vnorm{a_s}^2)-g(0)\leq\vnorm{a_s}^2 \leq 1$, owing to
Assumption~\ref{assumption_1}. Thus, we get $0\leq D_s \leq 1$. The same bounds also apply to $| W_s | \leq | D_s - \mathbb{E}_{s-1}[D_s] |$
almost surely. Then, by the maximal form of Freedman's inequality
\cite[Theorem~1.1]{tropp2011freedman},
\begin{align}\label{freedman}
\probb{
\exists\, k\in\{m_T+1,\ldots,t\}:
-\sum_{s=m_T+1}^k W_s\geq u_1,\;
\sum_{s=m_T+1}^k
\mathbb E_{s-1}[W_s^2]\leq u_2
}
\leq
\exp\left(
-\frac{u_1^2/2}{u_2+u_1/3}
\right),
\end{align}
for any $u_1 \geq 0, u_2 \geq 0$. Now, we show that for
$ u_1=\frac{3}{4}\sqrt{C_4 t}$ and $
u_2=\frac54\sqrt{C_4t}+2,
$
\begin{align}
\left\{
-\sum_{s=m_T+1}^t W_s\geq u_1
\right\}
\subseteq
\Bigg\{
\exists\, k\in\{m_T+1,\ldots,t\}:
-\sum_{s=m_T+1}^k W_s\geq u_1,\;
\sum_{s=m_T+1}^k
\mathbb{E}_{s-1}[W_s^2]\leq u_2
\Bigg\}.
\label{eq:freedman.inclusion}
\end{align}
Once this is established, \eqref{freedman} implies 
\begin{align}
\probb{
-\sum_{s=m_T+1}^t W_s
\geq
\frac34\sqrt{C_4t}
}
\leq
\exp\left(
-\frac{
\frac12\left(\frac34\sqrt{C_4t}\right)^2
}{
\frac54\sqrt{C_4t}+2
+
\frac13\left(\frac34\sqrt{C_4t}\right)
}
\right)\leq
\exp\left(
-\frac18\sqrt{C_4t}
\right),
\end{align}
for sufficiently large $m_T$, and hence for sufficiently large $T$. This bounds the probability given in \eqref{eq:Et.Ws}.

To prove \eqref{eq:freedman.inclusion}, consider any sample path such that
$
-\sum_{s=m_T+1}^t W_s\geq u_1.
$
Then there exists a first index
$
\tau'
:=
\min\left\{
k\in\{m_T+1,\ldots,t\}:
-\sum_{s=m_T+1}^k W_s\geq u_1
\right\}.
$
By the minimality of $\tau'$,
$
-\sum_{s=m_T+1}^{\tau'-1}W_s<u_1,
$
where the sum is understood to be zero if $\tau'=m_T+1$. Since
$|W_\tau'|\leq 1$, we have
\begin{align}
u_1 \leq -\sum_{s=m_T+1}^{\tau'}W_s
<
u_1+1.
\label{eq:first.crossing}
\end{align}
From the definition of $W_s$,
\begin{align}
-\sum_{s=m_T+1}^{\tau'} W_s
=
\sum_{s=m_T+1}^{\tau'} J_s\mathbb{E}_{s-1}[D_s]-\sum_{s=m_T+1}^{\tau'} D_sJ_s.
\label{eq:Ak.Rk}
\end{align}
Suppose that, the following two equations hold:
\begin{align}
&\sum_{s=m_T+1}^{\tau'} D_sJ_s
\leq
\frac12\sqrt{C_4t}+1,
\label{eq:Rk.bound}\\
&\sum_{s=m_T+1}^{\tau'}
\mathbb{E}_{s-1}[W_s^2]
\leq
\sum_{s=m_T+1}^{\tau'} J_s\mathbb{E}_{s-1}[D_s].
\label{eq:variance.Ak}
\end{align}

Then,
\begin{align}
\begin{aligned}
\sum_{s=m_T+1}^{\tau'} J_s\mathbb{E}_{s-1}[D_s]
&=
-\sum_{s=m_T+1}^{\tau'}W_s
+\sum_{s=m_T+1}^{\tau'} D_sJ_s 
\\
&\leq
\frac34\sqrt{C_4t}
+
\frac12\sqrt{C_4t}
+2 = 
u_2.
\label{eq:variance.bound.tau}
\end{aligned}
\end{align}
where the first line follows from \eqref{eq:Ak.Rk}, and the second line follows from \eqref{eq:first.crossing} and \eqref{eq:Rk.bound}. Hence, from \eqref{eq:variance.Ak},
\begin{align}
\sum_{s=m_T+1}^{\tau'}
\mathbb{E}_{s-1}[W_s^2]
\leq
\sum_{s=m_T+1}^{\tau'} J_s\mathbb{E}_{s-1}[D_s]
<
u_2.
\end{align}
Thus, at  $\tau'$,
$
-\sum_{s=m_T+1}^{\tau'}W_s\geq u_1 $ and $
\sum_{s=m_T+1}^{\tau'}
\mathbb{E}_{s-1}[W_s^2]\leq u_2, $
which proves \eqref{eq:freedman.inclusion}. Therefore, what remains to do is proving \eqref{eq:Rk.bound} and \eqref{eq:variance.Ak}. 

To that end, notice that $J_s$ is
$\mathcal F_{s-1}$-measurable, and that $D_s^2\leq D_s$ almost surely. These observations imply
\begin{align}
    \sum_{s=m_T+1}^{\tau'} \mathbb{E}_{s-1}[W_s^2]
    = \sum_{s=m_T+1}^{\tau'} J_s \mathrm{Var}_{s-1}(D_s)\leq
\sum_{s=m_T+1}^{\tau'} J_s \mathbb{E}_{s-1}[D_s^2]
    \leq \sum_{s=m_T+1}^{\tau'} J_s \mathbb{E}_{s-1}[D_s].
\end{align}
This proves \eqref{eq:variance.Ak}.

Now, we prove \eqref{eq:Rk.bound}. Notice that 
$ 
J_s\leq \mathbb{I}_{\{\Lambda_{s-1}<\frac12\sqrt{C_4t}\}}. 
$ 
Thus, if 
$ 
\Lambda_{m_T}\geq \frac12\sqrt{C_4t}, 
$ 
then, by the monotonicity of $\Lambda_s$, 
\[
\sum_{s=m_T+1}^{\tau'}D_sJ_s=0. 
\]
Hence, \eqref{eq:Rk.bound} holds trivially. Now consider the case where 
$ 
\Lambda_{m_T}<\frac12\sqrt{C_4t}. 
$ 
Let 
$ 
\tau:=\inf\left\{ 
s\geq m_T+1: 
\Lambda_s\geq\frac12\sqrt{C_4t} 
\right\}. 
$ 
Then 
\begin{align} 
\sum_{s=m_T+1}^{\tau'}D_sJ_s 
&\leq 
\sum_{s=m_T+1}^{\tau'} 
D_s\mathbb{I}_{\{\Lambda_{s-1}<\frac12\sqrt{C_4t}\}} 
\leq 
\Lambda_{\min\{\tau',\tau\}}-\Lambda_{m_T}. 
\label{eq:DsJs.bound} 
\end{align} 
Since $\tau'\leq t$, we have 
$ 
\min\{\tau,\tau'\}\leq \min\{\tau,t\}. 
$ 
By the monotonicity of $\Lambda_s$, 
$ 
\Lambda_{\min\{\tau,\tau'\}} 
\leq 
\Lambda_{\min\{\tau,t\}}. 
$  If $\tau\leq t$, then by the definition of $\tau$, 
$
\Lambda_{\tau-1}<\frac12\sqrt{C_4t}, 
$ 
and therefore, since $D_\tau\leq1$, 
\[ 
\Lambda_{\min\{\tau,t\}} 
= 
\Lambda_\tau 
= 
\Lambda_{\tau-1}+D_\tau 
\leq 
\frac12\sqrt{C_4t}+1. 
\] 
On the other hand, if $\tau>t$, then the threshold has not been reached by time $t$, so 
\[ 
\Lambda_{\min\{\tau,t\}} 
= 
\Lambda_t 
< 
\frac12\sqrt{C_4t}. 
\] 
Thus, in either case, 
$ 
\Lambda_{\min\{\tau,\tau'\}} 
\leq 
\frac12\sqrt{C_4t}+1. 
$ 
Combining this with \eqref{eq:DsJs.bound} and using 
$\Lambda_{m_T}\geq0$, we obtain 
$ 
\sum_{s=m_T+1}^{\tau'}D_sJ_s 
\leq 
\frac12\sqrt{C_4t}+1, 
$ 
which proves \eqref{eq:Rk.bound}.
\end{proof}}

The result bears resemblance to \cite[Lemma 9]{abeille2025when} in the context of linear TS. The preceding theorem turns 
Property \ref{prop_3} into the inference guarantee given in Property \ref{prop_1} which is used in the
regret analysis.

The pipeline for optimal regret accumulation therefore is now reduced to the diagram shown below.
\begin{figure}[H]
\centering
\begin{tikzpicture}[
  node distance = 1.4cm and 1.6cm,
  box/.style = {
    rectangle,
    rounded corners = 6pt,
    draw,
    minimum width = 5.4cm,
    minimum height = 2.4cm,
    align = center,
    font = \small,
    inner sep = 10pt,
    line width = 0.4pt
  },
  purplebox/.style = {box, fill = purple!8,  draw = purple!50},
  tealbox/.style  = {box, fill = teal!8,    draw = teal!50},
  coralbox/.style = {box, fill = orange!8,   draw = orange!50},
  optbox/.style   = {box, fill = teal!8,    draw = teal!50,
                     minimum width = 4.8cm,  minimum height = 1.6cm},
  midlabel/.style = {
    rectangle, rounded corners = 3pt,
    draw = gray!60, fill = white, text = black!80,
    font = \footnotesize\bfseries, inner sep = 4pt, line width = 0.4pt
  },
  arr/.style = {
    -{Stealth[length=8pt, width=7pt]},
    line width = 2pt, gray!60
  },
  connector/.style = {line width = 2pt, gray!60},
  arrBold/.style = {
    -{Stealth[length=8pt, width=7pt]},
    line width = 2pt, gray!60
  }
]

\node[purplebox] (P31) {
  \textbf{Property 3.1} \\[4pt]
  Self-correction \\[2pt]
  \footnotesize
  $\mathbb{E}[\Lambda_t \mid \mathcal{F}_{t-1}]\mathbf{1}_{G_{t-1}} \geq
    \!\left(\Lambda_{t-1} + \dfrac{C_4}{\Lambda_{t-1}}\right)\!\mathbf{1}_{G_{t-1}}$
};

\node[tealbox, below = of P31] (P23) {
  \textbf{Property 2.3} \\[4pt]
  Inference guarantee \\[2pt]
  \footnotesize
  $\|\widehat{\theta}_{t+1} - \theta^\star\|^2 \leq C_1\dfrac{\log T}{\sqrt{t}}$
};
 
\node[coralbox, right = of P23] (P24) {
  \textbf{Property 2.4} \\[4pt]
  Concentrated exploitation \\[2pt]
  \footnotesize
  $\displaystyle\sum_{t=M_T+1}^{T}
    \|a_t - a^\star(\widehat{\theta}_t)\|^2 \leq C_2\sqrt{T}\log T$
};

\coordinate (midTop) at ($(P23.south)!0.5!(P24.south)$);
\node[optbox, below = 3.4cm of midTop] (OPT) {
  \textbf{Optimal regret} \\[4pt]
  \footnotesize
  $\mathscr{R}_{\theta^\star}(T) \leq C_3\sqrt{T}\log T \quad \text{w.h.p.}$
};

\draw[arrBold] (P31.south) --
  node[midlabel, midway]{Theorem~3.2}
  (P23.north);

\coordinate (mergeM) at ($(OPT.north) + (0, 1.0)$);

\draw[connector] (P23.south) -- (mergeM);

\draw[connector] (P24.south) -- (mergeM);

\draw[arr] (mergeM) --
  node[midlabel, midway]{Theorem~2.5}
  (OPT.north);
 
\end{tikzpicture}
\caption{A diagramatic representation of the proof structure for optimal regret accumulation in linear bandits with spherical action sets.}
 \end{figure}

In the next two sections, we tackle the regret analysis of linear TS and UCB through the lens of the aforementioned pipeline. In particular, for both algorithms, we consider baseline good events  $\mathcal{G}_{m_T}, \ldots, \mathcal{G}_{T-1}$ that offer a uniform (loose) bound on the estimation error between times $m_T+1$ and $T$. Then, we verify the conditional drift condition in Property \ref{prop_3} and the sufficient exploitation condition of Property \ref{prop_2} on these baseline events. The known order-optimal regret accumulation rate for these algorithms then follow from Theorem
\ref{theorem_unified_regret_bound}, illustrating the power of the analytical pipeline that we have developed.

The analysis of both algorithms will rely on the structure of the drift $\Lambda_t - \Lambda_{t-1}$. Since $V_t$'s are monotonically non-decreasing in the Loewner order, we get,  
\begin{align}
\begin{aligned}
\Lambda_t-\Lambda_{t-1}
&=
\frac{
\operatorname{tr}(V_{t-1}^{-1})-\operatorname{tr}(V_t^{-1})
}{
\operatorname{tr}(V_t^{-1})\operatorname{tr}(V_{t-1}^{-1})
}
\\
&\geq
\Lambda_{t-1}^2
\left(
\operatorname{tr}(V_{t-1}^{-1})
-
\operatorname{tr}(V_t^{-1})
\right)
    \\
    &=
    {\Lambda_{t-1}^2}
    \frac{
    a_t^\top V_{t-1}^{-2}a_t
    }{
    1+a_t^\top V_{t-1}^{-1}a_t
    }
    \\
    &\geq \frac{\Lambda_{t-1}^2}{1+\lambda^{-1}} a_t^\top V_{t-1}^{-2}a_t.
    \end{aligned}
    \label{eq:Lt.Lt1}
\end{align}
The second-last line follows from the Sherman-Morrison formula for a rank-one update, and the last line from $V_{t-1} \succeq \lambda I$. As will become clear, the art lies in bounding $a_t^\top V_{t-1}^{-2}a_t$ from below in expectation for linear TS and path-wise for UCB with an order of $\Lambda_{t-1}^{-3}$.

The two algorithms take very different routes in achieving this bound. UCB's choice of $a_t$ is $\mathcal{F}_{t-1}$-measurable, and this deterministic choice allows us to relate $a_t^\top V_{t-1}^{-2}a_t$ to $\Lambda_{t-1}^{-3}$ directly, and not just in expectation as required in \eqref{eq:prop3_drift}. Such a guarantee is possible, given the convenient closed-form expression of a confidence set examined in Section \ref{sec:ucb}. However, such a path-wise guarantee is untenable, given the randomness in linear TS examined in Section \ref{sec:ts}. In contrast, to examine linear TS, $\E_{t-1}[a_t^\top V_{t-1}^{-2}a_t]$ is transformed to $\operatorname{tr}(V_{t-1}^{-2} \E_{t-1}[a_t a_t^\top])$, where we invoke Lemma \ref{lem:TS-predictable-design} to lower-bound $\E_{t-1}[a_t a_t^\top]$ in the Loewner order by a scaled version of $V_{t-1}^{-1}$. One might ask if this lemma holds for UCB. Notice that, $a_t a_t^\top$ when conditioned on $\mathcal F_{t-1}$, is a rank-one positive semidefinite matrix that cannot dominate the positive definite $V_{t-1}^{-1}$. This discussion highlights how two very different paths lead to self-correction, the underlying mechanism that produces necessary inference for regret accumulation.

\section{Analysis of Linear Thompson Sampling}
\label{sec:ts}
We now verify Properties~\ref{prop_3} and~\ref{prop_2} for linear
Thompson sampling in Algorithm~\ref{alg:thompson_sampling} on the
spherical action set. At round $t$, the algorithm samples
\[
    \widetilde\theta_t
    =
    \widehat\theta_t
    +
    c_{\mathrm{TS}} V_{t-1}^{-1/2}\eta_t
\]
and plays $a_t=a^\star(\widetilde\theta_t)$, where the perturbation
distribution is specified in Assumption~\ref{assumption:TS-perturbations}.

\begin{algorithm}
\caption{Thompson Sampling, adapted from
\cite{linear_thompson_sampling_revisited}}
\label{alg:thompson_sampling}
\begin{algorithmic}[1]
\STATE \textbf{Data:} $\widehat\theta_1$, $T$, $\lambda^{\textrm{TS}}$,
$c_{\mathrm{TS}}$, $\nu$
\STATE $V_0\gets\lambda^{\textrm{TS}} I$
\FOR{$t=1,\ldots,T$}
    \STATE Sample $\eta_t\sim\nu$
    \STATE
    $\widetilde\theta_t
    \gets
    \widehat\theta_t
    +
    c_{\mathrm{TS}} V_{t-1}^{-1/2}\eta_t$
    \STATE $a_t\gets a^\star(\widetilde\theta_t)$
    \STATE Observe $Y_t$
    \STATE $V_t\gets V_{t-1}+a_ta_t^\top$
    \STATE
    $\widehat\theta_{t+1}
    \gets
    V_t^{-1}\sum_{s=1}^t a_sY_s$
\ENDFOR
\end{algorithmic}
\end{algorithm}

\begin{assumption}[TS perturbations]
\label{assumption:TS-perturbations}
The perturbations $(\eta_t)_{t\geq1}$ are independent across time,
and $\eta_t$ is independent of $\Fcal_{t-1}$ with a common distribution that satisfies $\eta_t\overset{d}=-\eta_t$,  $\E\!\left[
            \eta_t\eta_t^\top
            \mathbbm 1_{\{\|\eta_t\|\leq L_\eta\}}
        \right] \succeq
        \kappa_\eta I_n$ for some $L_\eta<\infty$ and $\kappa_\eta>0$
    independent of $t$ and $T$, $\mathbb{P}(\widetilde\theta_t=0|\mathcal F_{t-1})=0$ almost surely for
    every $t$, and for every $u\in\mathbb S^{n-1}$, $\E[(u^\top\eta_t)^2]\leq 1$ and $\E[(u^\top\eta_t)^4]\leq K_\eta$.
\end{assumption}

Gaussian perturbations satisfy
Assumption~\ref{assumption:TS-perturbations}. For
$\eta_t\sim\mathcal N(0,I_n)$, one may take
$L_\eta=\sqrt{2(n+2)}$ and $\kappa_\eta=1/2$.
The conditions in Assumption~\ref{assumption:TS-perturbations} encode the same balance between anti-concentration and concentration of linear Thompson sampling distributions emphasized in \cite{linear_thompson_sampling_revisited}. The truncated second-moment condition
$\E[\eta_t\eta_t^\top\mathbbm 1_{{\|\eta_t\|\leq L_\eta}}]\succeq\kappa_\eta I_n$
plays the role of anti-concentration where among perturbations of moderate size, there remains nontrivial variation in every direction, which is what allows us to produce a lower bound on the predictable design matrix. Additionally, the $2^{\text{nd}}$ and $4^{\text{th}}$ moment bounds on $\eta_t$ prevent the perturbations from becoming too large, thereby controlling the cost of exploration.
The next result documents a key observation that
$\E_{t-1}[a_ta_t^\top]$ dominates a constant multiple of
$V_{t-1}^{-1}$ in the Loewner order, a fact that plays a crucial role in establishing Properties \ref{prop_3}
and \ref{prop_2}.

\begin{lemma}
\label{lem:TS-predictable-design}
Suppose Assumption~\ref{assumption:TS-perturbations} holds. If there exists $\Gamma > 0$ such that
$\|\widehat\theta_t\|\leq \Gamma$ and
$ \lambda_{\min,t-1} \geq 1$, then
\begin{align}
    \E_{t-1}[a_ta_t^\top]
    \succeq
    \frac{
        \kappa_\eta c_{\mathrm{TS}}^2
    }{
        (\Gamma+ c_{\mathrm{TS}}L_\eta)^2
    }
    V_{t-1}^{-1}.
\end{align}
\end{lemma}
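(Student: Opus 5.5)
We will prove the Loewner bound by testing against an arbitrary unit vector $u\in\mathbb S^{n-1}$. The goal is to show that
\[
\E_{t-1}[(u^\top a_t)^2]\geq \frac{\kappa_\eta c_{\mathrm{TS}}^2}{(\Gamma+c_{\mathrm{TS}}L_\eta)^2}\,u^\top V_{t-1}^{-1}u .
\]
Since $a_t=\widetilde\theta_t/\|\widetilde\theta_t\|$ almost surely (using $\mathbb P(\widetilde\theta_t=0\mid\Fcal_{t-1})=0$), we have
\[
(u^\top a_t)^2=\frac{(u^\top\widetilde\theta_t)^2}{\|\widetilde\theta_t\|^2}.
\]
Because this quantity is nonnegative, we may restrict to the event $E_t:=\{\|\eta_t\|\leq L_\eta\}$ and drop the complement.

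\textbf{Step 1: bound the denominator on $E_t$.} Since $V_{t-1}\succeq\lambda_{\min,t-1}I\succeq I$, the operator norm satisfies $\|V_{t-1}^{-1/2}\|\leq 1$. Hence on $E_t$,
\[
\|\widetilde\theta_t\|\leq\|\widehat\theta_t\|+c_{\mathrm{TS}}\|V_{t-1}^{-1/2}\eta_t\|\leq \Gamma+c_{\mathrm{TS}}L_\eta .
\]
This is exactly where the hypotheses $\|\widehat\theta_t\|\leq\Gamma$ and $\lambda_{\min,t-1}\geq 1$ enter. It gives
\[
\E_{t-1}[(u^\top a_t)^2]\geq (\Gamma+c_{\mathrm{TS}}L_\eta)^{-2}\,\E_{t-1}\big[(u^\top\widetilde\theta_t)^2\mathbbm 1_{E_t}\big].
\]

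\textbf{Step 2: remove the mean $\widehat\theta_t$ by symmetry.} Write $w:=c_{\mathrm{TS}}V_{t-1}^{-1/2}u$, which is $\Fcal_{t-1}$-measurable, so that $u^\top\widetilde\theta_t=u^\top\widehat\theta_t+w^\top\eta_t$. Expanding the square gives three terms:
\begin{itemize}
\item the term $(u^\top\widehat\theta_t)^2\,\mathbb P(E_t)$, which is nonnegative and can be dropped;
\item the cross term $2(u^\top\widehat\theta_t)\,\E[w^\top\eta_t\mathbbm 1_{E_t}]$, which vanishes;
\item the quadratic term $w^\top\E[\eta_t\eta_t^\top\mathbbm 1_{E_t}]w$.
\end{itemize}
The cross term vanishes because $\eta_t\overset{d}{=}-\eta_t$, the event $E_t$ is symmetric, and $\eta_t$ is independent of $\Fcal_{t-1}$. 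This is the one delicate point. The truncation is what makes the expectation finite and lets symmetry kill the cross term, so the mean shift $\widehat\theta_t$ can only help.

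\textbf{Step 3: apply anti-concentration.} The truncated second-moment condition in Assumption~\ref{assumption:TS-perturbations} gives
\[
w^\top\E[\eta_t\eta_t^\top\mathbbm 1_{E_t}]w\geq\kappa_\eta\|w\|^2=\kappa_\eta c_{\mathrm{TS}}^2\,u^\top V_{t-1}^{-1}u .
\]
Combining Steps 1–3 yields the claimed inequality for every unit $u$, and hence the Loewner bound.

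I expect no serious obstacle. The only care needed is to restrict to $E_t$ before using symmetry, since an untruncated expansion would not cleanly give the $\kappa_\eta$ bound, and to justify that the normalization $a^\star$ is well defined almost surely.
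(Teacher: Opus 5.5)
Your proof is correct. It uses the same ingredients as the paper's proof, in a different order:
\begin{itemize}
\item truncation to $\{\|\eta_t\|\le L_\eta\}$;
\item the bound $\Gamma+c_{\mathrm{TS}}L_\eta$ on $\|\widetilde\theta_t\|$;
\item symmetry of $\eta_t$ to eliminate the cross term;
\item discarding the $\widehat\theta_t$ contribution;
\item the truncated second-moment condition.
\end{itemize}

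The paper symmetrizes first, while the random denominator is still present. It writes $\E_{t-1}[a_ta_t^\top]$ as the average of the normalized rank-one terms at $\widehat\theta_t+Z_t$ and $\widehat\theta_t-Z_t$. It then bounds both denominators by the common, sign-invariant quantity $(\|\widehat\theta_t\|+\|Z_t\|)^2$, so the cross terms cancel pathwise. Only after that does it truncate.

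You truncate first and note that on $E_t$ the denominator is bounded by a deterministic constant. What remains is a quadratic polynomial in $\eta_t$ whose linear term has zero conditional mean. This makes the paper's pairing step unnecessary and gives a slightly shorter argument with the identical constant.

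The paper's ordering buys an intermediate matrix inequality,
\[
\E_{t-1}[a_ta_t^\top]\succeq \E_{t-1}\bigl[(\widehat\theta_t\widehat\theta_t^\top+Z_tZ_t^\top)/(\|\widehat\theta_t\|+\|Z_t\|)^2\bigr],
\]
which holds with no truncation and no condition on $\lambda_{\min,t-1}$. It also works directly in the Loewner order rather than through quadratic forms $u^\top(\cdot)u$, though these are equivalent.

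One small correction to your commentary: truncation is not what makes the cross term finite or lets symmetry kill it. Assumption~\ref{assumption:TS-perturbations} already gives $\E\|\eta_t\|^2\le n$, and $\E[\eta_t]=0$ by symmetry with or without the indicator. The truncation's real role is Step 1, the deterministic bound on $\|\widetilde\theta_t\|$. It is also the form in which the anti-concentration constant $\kappa_\eta$ is stated.
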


\begin{proof}
Let
$Z_t:=c_{\mathrm{TS}}V_{t-1}^{-1/2}\eta_t$. Then,
$\widetilde\theta_t=\widehat\theta_t+Z_t$ and
$a_t=a^\star(\widetilde\theta_t) = \widetilde\theta_t/\|\widetilde\theta_t\|$ almost surely from Assumption \ref{assumption:TS-perturbations}. Then, we have \footnote{This result is inspired by the proof of  \cite[Theorem 3]{log_high_probability_bound}, which uses a similar cancellation for a deterministic UCB-type policy.}
\begin{align}
\begin{aligned}
\E_{t-1}[a_ta_t^\top]
&=
\frac12\E_{t-1}\left[
    \frac{
        (\widehat\theta_t+Z_t)
        (\widehat\theta_t+Z_t)^\top
    }{
        \|\widehat\theta_t+Z_t\|^2
    }
    +
    \frac{
        (\widehat\theta_t-Z_t)
        (\widehat\theta_t-Z_t)^\top
    }{
        \|\widehat\theta_t-Z_t\|^2
    }
\right] \\
&\succeq
\frac12\E_{t-1}\left[
    \frac{
        (\widehat\theta_t+Z_t)(\widehat\theta_t+Z_t)^\top
        +
        (\widehat\theta_t-Z_t)(\widehat\theta_t-Z_t)^\top
    }{
        (\|\widehat\theta_t\|+\|Z_t\|)^2
    }
\right] \\
&=
\E_{t-1}\left[
    \frac{
        \widehat\theta_t\widehat\theta_t^\top+Z_tZ_t^\top
    }{
        (\|\widehat\theta_t\|+\|Z_t\|)^2
    }
\right] \\
&\succeq
\E_{t-1}\left[
    \frac{
        Z_tZ_t^\top
    }{
        (\|\widehat\theta_t\|+\|Z_t\|)^2
    }
\right],
\end{aligned}
\end{align}
where we have used the fact that
$\|\widehat\theta_t\pm Z_t\|\leq \|\widehat\theta_t\|+\|Z_t\|$ and that the distribution of $Z_t$ is symmetric about the origin. On the event $\{\|\eta_t\|\leq L_\eta\}$,
\begin{align}
    &\|Z_t\|
    \leq
    \frac{c_{\mathrm{TS}}}
    {\sqrt{\lambda_{\min,t-1}}}
    \|\eta_t\|
    \leq
    c_{\mathrm{TS}}L_\eta
    \\
    &   \implies
\E_{t-1}[a_ta_t^\top]
\succeq
\frac{1}{(\Gamma+ c_{\mathrm{TS}}L_\eta)^2}
\E_{t-1}\!\left[
    Z_tZ_t^\top
    \mathbbm 1_{\{\|\eta_t\|\leq L_\eta\}}
\right] \succeq
\frac{
    \kappa_\eta c_{\mathrm{TS}}^2
}{
    (\Gamma+ c_{\mathrm{TS}}L_\eta)^2
}
V_{t-1}^{-1},
\end{align}
where the last inequality follows from
Assumption~\ref{assumption:TS-perturbations}.
\end{proof}

We can now state the main result concerning Thompson sampling. We introduce an assumption on $\lambda_{\min,m_T}$ that we discuss at the end of this section.

\begin{theorem}
\label{thm:TS}
Suppose Assumptions~\ref{assumption_1} and
\ref{assumption:TS-perturbations} hold, and let $c_{\mathrm{TS}}>0$
be independent of $T$. If 
$\lambda_{\min, m_T}\geq(\log T)^2$, then 
Algorithm~\ref{alg:thompson_sampling} satisfies
Properties~\ref{prop_3} and~\ref{prop_2} for $T$ sufficiently large.
\end{theorem}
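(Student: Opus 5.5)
We verify the two properties on the good events $\mathcal G_{t-1}$ of \eqref{g_tdefined_1}. Since $\lambda_{\min,m_T}\geq(\log T)^2$, \eqref{g_tdefined_3} gives $\probb{\mathcal G_{T-1}}\geq 1-\frac{1}{8T}$ for large $T$, and these events are nested and $\mathcal F_{t-1}$-measurable by construction. On $\mathcal G_{t-1}$ we have $\|\widehat\theta_t\|\leq \theta_{\max}+\theta_{\min}/2=:\Gamma$ and $\|\widehat\theta_t\|\geq\theta_{\min}/2$. Also $\lambda_{\min,t-1}\geq\lambda_{\min,m_T}\geq(\log T)^2\geq 1$ for $t\geq m_T+1$. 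So Lemma~\ref{lem:TS-predictable-design} applies and gives $\E_{t-1}[a_ta_t^\top]\succeq c_0V_{t-1}^{-1}$ with $c_0:=\kappa_\eta c_{\mathrm{TS}}^2/(\Gamma+c_{\mathrm{TS}}L_\eta)^2$, which does not depend on $T$.

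\emph{Property~\ref{prop_3}.} Take conditional expectations in \eqref{eq:Lt.Lt1}:
\[
\E_{t-1}[\Lambda_t-\Lambda_{t-1}]\geq \frac{\Lambda_{t-1}^2}{1+\lambda^{-1}}\operatorname{tr}\!\left(V_{t-1}^{-2}\E_{t-1}[a_ta_t^\top]\right)\geq \frac{c_0\Lambda_{t-1}^2}{1+\lambda^{-1}}\operatorname{tr}(V_{t-1}^{-3}).
\]
Let $\mu_i$ be the eigenvalues of $V_{t-1}^{-1}$. By the power-mean (H\"older) inequality, $\sum_i\mu_i^3\geq n^{-2}(\sum_i\mu_i)^3=n^{-2}\Lambda_{t-1}^{-3}$. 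The drift is therefore at least $C_4/\Lambda_{t-1}$ with $C_4:=c_0/(n^2(1+\lambda^{-1}))$. Multiplying by $\mathbb I_{\mathcal G_{t-1}}$ gives \eqref{eq:prop3_drift}. Once this is established, Theorem~\ref{theorem_unified_inference_bound} also yields $\lambda_{\min,t}\geq\frac12\sqrt{C_4t}$ for $t\in[M_T,T]$ with probability at least $1-\frac1{4T}$; we use this below.

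\emph{Property~\ref{prop_2}.} Lemma~\ref{lemma:xy} with $x=\widetilde\theta_t$, $y=\widehat\theta_t$ gives $\|a_t-a^\star(\widehat\theta_t)\|^2\leq 4\|Z_t\|^2/\|\widehat\theta_t\|^2\leq 16\theta_{\min}^{-2}\|Z_t\|^2$ on $\mathcal G_{t-1}$, where $Z_t=c_{\mathrm{TS}}V_{t-1}^{-1/2}\eta_t$.

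Two controls on $\|Z_t\|^2$ follow from Assumption~\ref{assumption:TS-perturbations}. For the mean, $\E_{t-1}\|Z_t\|^2=c_{\mathrm{TS}}^2\operatorname{tr}(V_{t-1}^{-1}\E[\eta\eta^\top])\leq c_{\mathrm{TS}}^2\,n/\lambda_{\min,t-1}$. For the second moment, expanding $\|Z_t\|^2=\sum_i \mu_i (v_i^\top\eta)^2$ and applying Cauchy--Schwarz to the fourth moments gives $\E_{t-1}\|Z_t\|^4\leq c_{\mathrm{TS}}^4K_\eta\,n^2/\lambda_{\min,t-1}^2$.

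Define $X_t:=\|Z_t\|^2\mathbb I_{\mathcal G_{t-1}}\mathbb I_{\{\lambda_{\min,t-1}\geq\frac12\sqrt{C_4(t-1)}\}}$. The indicators are predictable, so each conditional moment bound above holds for $X_t$ with $\lambda_{\min,t-1}$ replaced by $\tfrac12\sqrt{C_4(t-1)}$. Hence
\[
\sum_{t=M_T+1}^T \E_{t-1}X_t\lesssim\sum_t t^{-1/2}\lesssim\sqrt T,\qquad \sum_t\E_{t-1}X_t^2\lesssim\sum_t t^{-1}\lesssim\log T.
\]

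The main obstacle is that $X_t$ is unbounded, so the bounded-increment Freedman inequality does not apply directly. I would first try a one-sided concentration for nonnegative variables, namely $\probb{\sum(X_t-\E_{t-1}X_t)\geq u}\leq\exp(-u^2/(2\sum\E_{t-1}X_t^2))$, which follows from $e^{-x}\leq 1-x+x^2/2$. We need the upper tail, so this must be applied to $-X_t$, giving $\E_{t-1}e^{-\gamma(\E X - X)}$-type bounds. If that route fails, the fallback is to truncate $X_t$ at level $\log T/\sqrt t$ and bound the truncation remainder by Markov's inequality using the fourth moment. Either way, choosing $u=\sqrt T\log T$ makes the failure probability far below $\frac1{8T}$.

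Finally, intersect three events: $\mathcal G_{T-1}$, the $\lambda_{\min}$ event from Theorem~\ref{theorem_unified_inference_bound}, and the martingale event. On this intersection every indicator in $X_t$ equals one, so
\[
\sum_{t=M_T+1}^T\|a_t-a^\star(\widehat\theta_t)\|^2\leq 16\theta_{\min}^{-2}\sum_t X_t\leq C_2\sqrt T\log T.
\]
The total failure probability is at most $\frac1{8T}+\frac1{4T}+o(1/T)\leq\frac1{2T}$, so Property~\ref{prop_2} holds.
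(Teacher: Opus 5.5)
Your verification of Property~\ref{prop_3} is the same as the paper's. Reducing Property~\ref{prop_2} to a bound on $\sum_t X_t$ with predictable indicators is also sound. The gap is the concentration step: neither of your two routes gives failure probability $O(1/T)$, so your closing claim that ``either way'' it is far below $\frac{1}{8T}$ is unsupported.

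\emph{First route.} For $X_t\ge 0$, the inequality $e^{-x}\le 1-x+x^2/2$ yields $\E_{t-1}[e^{\gamma(\E_{t-1}X_t-X_t)}]\le e^{\gamma^2\E_{t-1}X_t^2/2}$. This controls only the \emph{lower} tail of $\sum_t X_t$. You cannot apply it to $-X_t$, because $-X_t$ is not nonnegative. The upper tail would need $\E_{t-1}[e^{\gamma X_t}]<\infty$. Assumption~\ref{assumption:TS-perturbations} controls moments only up to order four, so $\E[e^{\gamma\|\eta_t\|^2}]$ may be infinite for every $\gamma>0$. Hence the sub-Gaussian upper-tail bound you state is not available.

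\emph{Truncation fallback.} This fails quantitatively. Take $b_t=\log T/\sqrt t$ and use $\E[X_t^2]\lesssim 1/t$. Markov's inequality gives $\E[\sum_t X_t\mathbb I_{\{X_t>b_t\}}]\le\sum_t\E[X_t^2]/b_t\lesssim\sqrt T/\log T$. Hence you only get $\probb{\sum_t X_t\mathbb I_{\{X_t>b_t\}}\ge\sqrt T\log T}\lesssim 1/(\log T)^2$, far from $\frac1{8T}$. A union bound over the events $\{X_t>b_t\}$ is even worse.

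The missing observation is that no exponential inequality is needed. The slack $\sqrt T\log T$ dwarfs the $O(\sqrt{\log T})$ standard deviation, so a second-moment bound already suffices. In your setup, let $Y_t:=X_t-\E_{t-1}X_t$. These are square-integrable, since $X_t\le c_{\mathrm{TS}}^2\|\eta_t\|^2/\lambda^{\textrm{TS}}$, and they are orthogonal martingale differences. Therefore $\E[(\sum_t Y_t)^2]=\sum_t\E[Y_t^2]\le\sum_t\E[X_t^2]\lesssim\log T$. Chebyshev's inequality at level $\sqrt T\log T$ then gives failure probability $O(1/(T\log T))$, which closes your argument.

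The paper takes a simpler route that avoids martingales altogether:
\begin{enumerate}
\item It uses the pathwise bound $\|V_{t-1}^{-1/2}\eta_t\|^2\le\lambda_{\min,t-1}^{-1}\|\eta_t\|^2$.
\item On the event $\lambda_{\min,t-1}\ge\frac14\sqrt{C_4t}$ from Theorem~\ref{theorem_unified_inference_bound}, this reduces everything to $S_T=\sum_t t^{-1/2}\|\eta_t\|^2$. That is a deterministically weighted sum of \emph{independent} variables, with $\E[S_T]\le 2n\sqrt T$ and $\operatorname{var}[S_T]\le n^2K_\eta(1+\log T)$.
\item It applies Chebyshev with deviation $\sqrt{8T\operatorname{var}[S_T]}=O(\sqrt{T\log T})$, which fails with probability at most $\frac{1}{8T}$.
\end{enumerate}
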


\begin{proof}
Define $\mathcal G_{t-1}$ as given in \eqref{g_tdefined_1}. 
 As examined in \eqref{g_tdefined_1}-\eqref{g_tdefined_3}, for every sufficiently large $T$,
$\probb{\mathcal G_{T-1}}
    \geq
    1-\frac{1}{8T}$.
The proof proceeds through three high-probability events. On the event $\mathcal G_{t-1}$, we first establish the predictable-design lower bound
$
\E_{t-1}[a_ta_t^\top]\succsim 
V_{t-1}^{-1}
$
and use it to verify the drift condition in Property~\ref{prop_3}. For Property~\ref{prop_2}, we first show that, on the event $\mathcal G_{T-1}$,

$$
\sum_{t=M_T+1}^T
\|a_t-a^\star(\widehat\theta_t)\|^2
\lesssim
\sum_{t=M_T+1}^T
\lambda_{\min,t-1}^{-1}\|\eta_t\|^2.
$$

Theorem~\ref{theorem_unified_inference_bound} and Property~\ref{prop_3} , then, implies that $\lambda_{\min,t-1}\gtrsim\sqrt{t}$ simultaneously for $t=M_T+1,\ldots,T$ on a high-probability event. We intersect this event with $\mathcal G_{T-1}$ and the event
$
\sum_{t=M_T+1}^T t^{-1/2}\|\eta_t\|^2
\lesssim
\sqrt{T}\log T.
$
On their intersection,
$
\sum_{t=M_T+1}^T
\|a_t-a^\star(\widehat\theta_t)\|^2
\lesssim
\sqrt{T}\log T.
$
A union bound shows that this intersection has probability at least $1-1/(2T)$, thereby establishing Property~\ref{prop_2}.

We first verify Property~\ref{prop_3}. Fix $t=m_T+1,\ldots,T$ and assume that we are operating on the event $\mathcal G_{t-1}$. Then, we have
$\|\widehat\theta_t\|\leq \|\theta^{\star}\| + \theta_{\min}/2 \leq \theta_{\max} + \theta_{\min}/2 \leq 2\theta_{\max}$.
Moreover,
$V_{t-1}\succeq V_{m_T}$, implying
$\lambda_{\min,t-1}\geq(\log T)^2 \gtrsim 1$. Hence, for $T \geq 3$,
Lemma \ref{lem:TS-predictable-design} gives
\begin{align}
\label{eq:TS-predictable-design}
    \E_{t-1}[a_ta_t^\top]
    \succeq
    qV_{t-1}^{-1},
    \qquad
    q:=
    \frac{\kappa_\eta c_{\mathrm{TS}}^2}
    {(2\theta_{\max}+c_{\mathrm{TS}}L_\eta)^2}.
\end{align}

We next verify \eqref{eq:prop3_drift}. From \eqref{eq:Lt.Lt1}, it follows that
\begin{align}
\E_{t-1}[\Lambda_t-\Lambda_{t-1}]
&\geq
\frac{\Lambda_{t-1}^2}{1+(\lambda^{\textrm{TS}})^{-1}}
\E_{t-1}\!\left[
    a_t^\top V_{t-1}^{-2}a_t
\right] \notag\\
&=
\frac{\Lambda_{t-1}^2}{1+(\lambda^{\textrm{TS}})^{-1}}
\operatorname{tr}\!\left(
    V_{t-1}^{-2}\E_{t-1}[a_ta_t^\top]
\right)\notag\\
&\geq
\frac{q\Lambda_{t-1}^2}{1+(\lambda^{\textrm{TS}})^{-1}}
\operatorname{tr}(V_{t-1}^{-3})
\label{trace_loewner}\\
&\geq
\frac{q}{n^2(1+(\lambda^{\textrm{TS}})^{-1})}
\Lambda_{t-1}^{-1},
\notag
\end{align}
where \eqref{trace_loewner} follows from $\operatorname{tr}(AB) \geq \operatorname{tr}(AC)$ for the matrices $A \succeq 0, B \succeq C$ together with Lemma \ref{lem:TS-predictable-design}. The last inequality follows from the power mean inequality applied to the eigenvalues of $V_{t-1}$, giving
$\operatorname{tr}(V_{t-1}^{-3})
\geq\operatorname{tr}(V_{t-1}^{-1})^3/n^2 = \Lambda_{t-1}^{-3}/n^2$.
Thus, \eqref{eq:prop3_drift} holds with
\begin{align}\label{ts_c4_def}
    C_4
    :=
    \frac{\kappa_\eta c_{\mathrm{TS}}^2}
    {n^2(1+(\lambda^{\textrm{TS}})^{-1})
    (2\theta_{\max}+c_{\mathrm{TS}}L_\eta)^2}.
\end{align}

We next verify Property~\ref{prop_2}. By
Theorem~\ref{theorem_unified_inference_bound}, with probability at
least $1-\frac{1}{4T}$, for all $t=M_T+1,\ldots,T$,
\begin{align}\label{TS_inference_growth_prop_2}
    \lambda_{\min,t}
    \geq
    \frac12\sqrt{C_4 t}.
\end{align}

Since $V_t\preceq V_{t-1}+I$, on the same event
$
    \lambda_{\min,t-1}
    \geq
    \lambda_{\min,t}-1
    \geq
    \frac14\sqrt{C_4 t}$
for every sufficiently large $T$. By union bound, the intersection of $\mathcal G_{T-1}$ and the event given in \eqref{TS_inference_growth_prop_2} occurs with probability at least $1 - 3/(8T)$. On this intersection, Lemma \ref{lemma:xy} and \eqref{TS_inference_growth_prop_2} imply 
\begin{align}
\begin{aligned}
\sum_{t=M_T+1}^T \|a_t-a^\star(\widehat\theta_t)\|^2
&\leq \sum_{t=M_T+1}^T
\frac{
    4\|\widetilde\theta_t-\widehat\theta_t\|^2
}{
    \|\widehat\theta_t\|^2
} \\
&= \sum_{t=M_T+1}^T
\frac{
    4c_{\mathrm{TS}}^2
    \|V_{t-1}^{-1/2}\eta_t\|^2
}{
    \|\widehat\theta_t\|^2
} \\
&\leq
\frac{16c_{\mathrm{TS}}^2}{\theta_{\min}^2}
\sum_{t=M_T+1}^T
\lambda_{\min,t-1}^{-1}
\|\eta_t\|^2
\qquad \text{(Lemma~\ref{lemma:xy})} \\
&\leq
\frac{64c_{\mathrm{TS}}^2}
{\theta_{\min}^2\sqrt{C_4}}
\underbrace{\sum_{t=M_T+1}^T
t^{-1/2}\|\eta_t\|^2}_{:= S_T}
\qquad \text{by~\eqref{TS_inference_growth_prop_2}.}
\label{subgaussian_proof}
\end{aligned}
\end{align}
Hence, to show that $\sum_{t=M_T+1}^T \|a_t-a^\star(\widehat\theta_t)\|^2 \lesssim \sqrt{T}\log T$ with probability at least $1-\frac{1}{2T}$, as required by Property \ref{prop_2}, it remains to establish that $S_T \lesssim \sqrt{T}\log T$ with probability at least $1-\frac{1}{8T}$. Intersecting this event with $\mathcal G_{T-1}$ and the event in \eqref{TS_inference_growth_prop_2}, and applying a union bound, yields the desired result.

To this end, first, we will show that $\mathbb E[S_T] \lesssim \sqrt T$ and apply a concentration inequality to bound $S_T$ with high probability. By Assumption \ref{assumption:TS-perturbations}, for every $u\in\mathbb S^{n-1}$,
$
\mathbb E(u^\top\eta_t)^2\leq 1
$
and
$
\mathbb E(u^\top\eta_t)^4\leq K_\eta.
$
Denote the $i^{\text{th}}$ entry of $\eta_t$ as $\eta_t^i$. Let $e_1, \ldots, e_n$ denote the $n$ canonical basis vectors in $\mathbb{R}^n$. Taking $u=e_i$ gives
$
\mathbb E[(\eta_t^i)^2]\leq 1
$
and
$
\mathbb E[(\eta_t^i)^4]\leq K_\eta
$
for every $i=1,\ldots,n$. Hence,
\begin{align}
\mathbb E\|\eta_t\|^2
&=
\sum_{i=1}^n \mathbb E[(\eta_t^i)^2]
\leq n.
\end{align}
It follows that
\begin{align}
\mathbb E[S_T]
&\leq
n\sum_{t=M_T+1}^T t^{-1/2}
\leq
2n\sqrt{T}.
\end{align}
Next, we apply the Chebyshev's inequality,
\begin{align}
\mathbb P\left(
S_T-\mathbb E[S_T]
\geq
\sqrt{8T\operatorname{var}[S_T]}
\right)
\leq
\frac{1}{8T}.
\end{align}
Suppose that $\operatorname{var}[S_T] \leq n^2 K_\eta(1 + \log T)$. Then, with probability at least $1-\frac{1}{8T}$,
\begin{align}
S_T
&\leq
2n\sqrt{T}
+
2\sqrt{2}\,n \sqrt{K_\eta} \sqrt{T(1+\log T)}
\lesssim
\sqrt{T}\log T.
\end{align}
which combined with \eqref{subgaussian_proof} and a union bound proves $\sum_{t=M_T+1}^T \|a_t-a^\star(\widehat\theta_t)\|^2 \lesssim \sqrt{T}\log T$ with probability at least $1-1/(2T)$, as required by Property \ref{prop_2} which holds with
$
C_2
:=
\frac{256 n (1 + \sqrt{K_\eta}) c_{\mathrm{TS}}^2}
{\theta_{\min}^2\sqrt{C_4}}
$ for a large enough $T$.

To conclude, we show that $\operatorname{var}[S_T] \leq n^2 K_\eta(1 + \log T)$. Since the perturbations $(\eta_t)_{t\geq1}$ are independent,
\begin{align}
\begin{aligned}\label{var_bound_ts_1}
\operatorname{var}[S_T]
=
\sum_{t=M_T+1}^T
t^{-1}\operatorname{var}[\|\eta_t\|^2] \leq
\sum_{t=M_T+1}^T
t^{-1}\mathbb E\|\eta_t\|^4. 
\end{aligned}
\end{align}
To bound the sum, notice that, 
\begin{align}
\begin{aligned}
\mathbb E\|\eta_t\|^4
=\mathbb E \left [ \left( \sum_{i = 1}^n (\eta_t^i)^2\right)^2\right]  = 
\sum_{i,j=1}^n
\mathbb E\left[(\eta_t^i)^2(\eta_t^j)^2\right] \leq
\sum_{i,j=1}^n
\sqrt{
\mathbb E[(\eta_t^i)^4]
\mathbb E[(\eta_t^j)^4]
}
\leq n^2 K_\eta,
\end{aligned}
\end{align}
where the first inequality follows from the Cauchy--Schwarz inequality. Applying this bound to \eqref{var_bound_ts_1}, and using
$
n^2K_\eta\sum_{t=1}^T t^{-1}
\leq
n^2K_\eta(1+\log T)$ finalizes the proof.
\end{proof}

Algorithm~\ref{alg:thompson_sampling} satisfies the inference and
action-concentration conditions required by
Theorem~\ref{theorem_unified_regret_bound}, and hence achieves the
order-optimal regret rate.
We  end this section by examining the $\lambda_{\min, m_T} \geq \log^2 T$ assumption. While this may seem restrictive, a simple exploratory policy over this burn-in period up until $t=m_T$ produces such a growth. To illustrate, let $e_1, \ldots, e_n$ denote the $n$ canonical basis vectors in $\mathbb{R}^n$. Playing $e_1, \ldots, e_n$ as the actions in sequence, one would obtain $V_{kn} = \lambda I + k \sum_{i=1}^n e_i e_i^\top = (\lambda + k) I$ for which $\lambda_{\min, kn} = \lambda + k$. With this exploratory policy for $kn \approx m_T \approx \frac{1}{3}\sqrt{T}\log{T}$, we will have
\begin{align}
    \lambda_{\min, m_T} \approx \lambda + \frac{1}{3n} \sqrt{T}\log{T} \gtrsim \log^2(T)\label{forced_exp_eq}
\end{align}
for a large enough $T$, as required. Running exploratory burn-ins in adaptive control literature are common, e.g., see \cite{etc_paper,bandit_algorithms_book}.

\section{Analysis of UCB}\label{sec:ucb}

The UCB algorithm selects actions by being optimistic within a confidence
set centered at $\widehat\theta_t$. At round $t$, it maintains the
ellipsoidal confidence set
\begin{align}
\mathcal E_t
:=
\left\{
\theta\in\mathbb R^n:
\|\theta-\widehat\theta_t\|_{V_{t-1}}
\le \rho_T
\right\},
\end{align}
where $\rho_T\simeq\sqrt{\log T}$, as in \eqref{eq:rhoT.def}. With high probability, this set contains
the true parameter $\theta^\star$ simultaneously over all  rounds; 
see \cite{improved_algorithms_for_stochastic_bandits}. The UCB
action is chosen as
\begin{align}
\theta_t'
\in
\argmax_{\theta\in\mathcal E_t}
\theta^\top a^\star(\theta)\quad  a_t =
a^\star(\theta_t'),
\end{align}
i.e., the optimal action corresponding to the most favorable or optimistic parameter estimate within the confidence ellipsoid $\mathcal{E}_t$. In what follows, we consider an optimism-driven algorithm that again maximizes the same objective, but swaps 
the ellipsoid $\mathcal E_t$ 
with a \emph{box}-shaped confidence set $\mathcal{B}_t$, described below. Consider a spectral decomposition of $V_{t-1}$,
\begin{align}
V_{t-1}
=
\sum_{i=1}^n
\lambda_{i,t-1}
v_{i,t-1}v_{i,t-1}^\top.
\label{eq:spectral_decomp}
\end{align}
with $v_{i, t-1}$ being orthonormal vectors that span $\mathbb{R}^n$. Define
\begin{align}\label{eq:define_bt}
\mathcal B_t
=
\left\{
\theta\in\mathbb R^n:
\left|
v_{i,t-1}^\top(\theta-\widehat\theta_t)
\right|
\le
\frac{\rho_T}{\sqrt{\lambda_{i,t-1}}},
\quad i=1, \ldots, n
\right\}.
\end{align}
The set $\mathcal{B}_t$ is a rectangle centered at $\widehat\theta_t$ with sides along the eigenvectors of $V_{t-1}$. 
Since
\begin{align}
\lambda_{i,t-1}
\bigl(v_{i,t-1}^\top(\theta-\widehat\theta_t)\bigr)^2
\leq
\|\theta-\widehat\theta_t\|_{V_{t-1}}^2, \quad \text{for } i=1, \ldots, n,
\end{align}
we have $\mathcal E_t\subseteq\mathcal B_t$, i.e., our considered confidence set always contains the one in standard UCB implementations.
The modified optimistic parameter and its corresponding optimal action are then given by
\begin{align}
\widetilde\theta_t
\in
\argmax_{\theta\in\mathcal B_t} \ 
\theta^\top a^\star(\theta),\qquad
a_t
=
a^\star(\widetilde\theta_t).
\end{align}
The procedure is formally presented in Algorithm \ref{alg:ucb}. The following theorem establish Properties \ref{prop_3} and \ref{prop_2} for Algorithm \ref{alg:ucb}. 

\begin{algorithm}
\caption{UCB Algorithm, adapted from \cite{improved_algorithms_for_stochastic_bandits, context_bandits_ucb}}
\label{alg:ucb}
\begin{algorithmic}[1]
\STATE \textbf{Data:} $\widehat{\theta}_1$, T, $\lambda^{\textrm{UCB}} > 0$
\STATE  $V_0\gets\lambda^{\textrm{UCB}} I_n$  
\STATE  $\rho_T \gets  M\sqrt{
    n\log\left(4T (1+T/\lambda^{\textrm{UCB}})
    \right)
}
+
\sqrt{\lambda^{\textrm{UCB}}}\,\theta_{\max}$
\FOR{$t = 1, \dots, T$}
    \STATE $\tilde \theta_t
\in
\argmax_{\theta\in\mathcal B_t}
\theta^\top a^\star(\theta)$
    \STATE $a_t\gets\arg \max_{a \in \Acal} \, a^\top \tilde \theta_t$
    \STATE Observe  $Y_t$ 
    \STATE  $V_t\gets V_{t-1} + a_t a_t^\top$
    \STATE  $\widehat{\theta}_{t + 1}\gets V_{t}^{-1} \sum_{s = 1}^t a_s Y_{s}$
\ENDFOR
\end{algorithmic}
\end{algorithm}

\begin{theorem}
\label{thm:UCB}
Suppose Assumption~\ref{assumption_1} holds. If
$
\lambda_{\min, m_T}\geq(\log T)^2
$, then  Algorithm~\ref{alg:ucb} satisfies
Properties~\ref{prop_3} and~\ref{prop_2} for $T$ sufficiently large.
\end{theorem}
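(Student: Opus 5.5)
The plan is to exploit the closed form of the optimistic parameter over the box $\mathcal B_t$, and then feed it into the drift bound \eqref{eq:Lt.Lt1} and into Lemma~\ref{lemma:xy}. Both properties will be verified on the event $\mathcal G_{t-1}$ of \eqref{g_tdefined_1}. By \eqref{g_tdefined_1}--\eqref{g_tdefined_3} and the hypothesis $\lambda_{\min,m_T}\geq(\log T)^2$, this event satisfies $\probb{\mathcal G_{T-1}}\geq 1-\frac{1}{8T}$.

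\emph{Step 1: structure of $\widetilde\theta_t$.} Since $\theta^\top a^\star(\theta)=\|\theta\|$, the optimistic step maximizes the convex function $\|\theta\|^2$ over the box $\mathcal B_t$, so a maximizer is attained at a vertex. Write $c_i:=v_{i,t-1}^\top\widehat\theta_t$ and $r_i:=\rho_T/\sqrt{\lambda_{i,t-1}}$. Then $\|\theta\|^2=\sum_i(c_i+s_ir_i)^2$ at the vertex with signs $s_i\in\{\pm1\}$, and this is maximized by $s_i=\operatorname{sign}(c_i)$ (either sign if $c_i=0$). Hence
\[
\widetilde\theta_t=\widehat\theta_t+\sum_{i=1}^n s_i r_i v_{i,t-1},
\qquad
|v_{1,t-1}^\top\widetilde\theta_t|=|c_1|+r_1\geq \frac{\rho_T}{\sqrt{\lambda_{\min,t-1}}}.
\]
Consequently $\|\widetilde\theta_t-\widehat\theta_t\|^2=\rho_T^2\operatorname{tr}(V_{t-1}^{-1})\leq n\rho_T^2/\lambda_{\min,t-1}$. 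On $\mathcal G_{t-1}$ we have $\|\widehat\theta_t\|\leq 2\theta_{\max}$. Moreover $\lambda_{\min,t-1}\geq(\log T)^2$ while $\rho_T^2=\mathcal O(\log T)$, so for large $T$ we get $\|\widetilde\theta_t\|\leq 3\theta_{\max}$ and $\widetilde\theta_t\neq0$.

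\emph{Step 2: Property~\ref{prop_3}, which holds path-wise.} Since $a_t=\widetilde\theta_t/\|\widetilde\theta_t\|$ is $\Fcal_{t-1}$-measurable, expanding in the eigenbasis gives
\[
a_t^\top V_{t-1}^{-2}a_t\geq \frac{(v_{1,t-1}^\top a_t)^2}{\lambda_{\min,t-1}^2}\geq \frac{\rho_T^2}{9\theta_{\max}^2\lambda_{\min,t-1}^3}.
\]
Next, $\operatorname{tr}(V_{t-1}^{-1})\leq n/\lambda_{\min,t-1}$ gives $\lambda_{\min,t-1}\leq n\Lambda_{t-1}$, so $\lambda_{\min,t-1}^{-3}\geq n^{-3}\Lambda_{t-1}^{-3}$. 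Plugging this into \eqref{eq:Lt.Lt1} and using $\rho_T\geq\sqrt{\lambda^{\mathrm{UCB}}}\theta_{\max}$ yields, on $\mathcal G_{t-1}$,
\[
\Lambda_t-\Lambda_{t-1}\geq \frac{\lambda^{\mathrm{UCB}}}{9n^3(1+(\lambda^{\mathrm{UCB}})^{-1})}\Lambda_{t-1}^{-1}.
\]
This holds deterministically, hence also in conditional expectation, which gives Property~\ref{prop_3} with a $T$-independent $C_4$.

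\emph{Step 3: Property~\ref{prop_2}.} By Theorem~\ref{theorem_unified_inference_bound}, with probability at least $1-\frac1{4T}$ we have $\lambda_{\min,t}\geq\frac12\sqrt{C_4t}$ for $t\in[M_T,T]$. As in the TS proof, this also gives $\lambda_{\min,t-1}\geq\frac14\sqrt{C_4t}$. Intersect this event with $\mathcal G_{T-1}$; the intersection has probability at least $1-\frac{3}{8T}$. On it, Lemma~\ref{lemma:xy} with $\|\widehat\theta_t\|\geq\theta_{\min}/2$ gives
\[
\sum_{t=M_T+1}^T\|a_t-a^\star(\widehat\theta_t)\|^2\leq\sum_{t=M_T+1}^T\frac{16\,n\rho_T^2}{\theta_{\min}^2\lambda_{\min,t-1}}\lesssim \log T\sum_{t}\frac{1}{\sqrt t}\lesssim\sqrt T\log T.
\]
Unlike TS, no extra concentration step is needed here. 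The step I expect to require the most care is Step~1: one must justify that the box maximizer is this sign-matched vertex, including tie cases with $c_i=0$ and the fact that $\widetilde\theta_t\ne0$. One must also verify that the $v_{\min}$-component really receives the full offset $r_1$, since this is what produces the $\lambda_{\min}^{-3}$ scale. After that, the remaining bounds are routine.
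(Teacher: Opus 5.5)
Your proposal is correct and follows the paper's proof essentially step for step. It uses the sign-matched closed form of $\widetilde\theta_t$ over $\mathcal B_t$, and the path-wise bound $a_t^\top V_{t-1}^{-2}a_t \geq (v_{\min,t-1}^\top a_t)^2/\lambda_{\min,t-1}^2 \gtrsim \Lambda_{t-1}^{-3}$ fed into \eqref{eq:Lt.Lt1}. It then invokes Theorem~\ref{theorem_unified_inference_bound} together with Lemma~\ref{lemma:xy} for Property~\ref{prop_2}, exactly as the paper does.

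The only difference is how you bound $\|\widetilde\theta_t\|$, and it is cosmetic. You get $\|\widetilde\theta_t\|\leq 3\theta_{\max}$ from $\lambda_{\min,t-1}\geq(\log T)^2$ for large $T$. The paper instead uses $\lambda_{\min,t-1}\geq\lambda^{\mathrm{UCB}}$ and $\rho_T\geq\sqrt{\lambda^{\mathrm{UCB}}}\,\theta_{\max}$ to get $\|\widetilde\theta_t\|\leq \rho_T(2+\sqrt n)/\sqrt{\lambda^{\mathrm{UCB}}}$, which holds for every $T$ and gives the constant $(2+\sqrt n)^{-2}$ where you have $1/9$.
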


\begin{proof}
Define $\mathcal G_{t-1}$ as in \eqref{g_tdefined_1}. Since
$
\lambda_{\min, m_T}\geq(\log T)^2,
$
\eqref{g_tdefined_3} follows
for every sufficiently large $T$. Fix a $t$ from $ m_T + 1 \dots T$ and work on $\mathcal G_{t-1}$. From \eqref{eq:Lt.Lt1},
\begin{align}
\begin{aligned}
\Lambda_t-\Lambda_{t-1}
    &\geq \frac{\Lambda_{t-1}^2}{1+(\lambda^\mathrm{UCB})^{-1}} a_t^\top V_{t-1}^{-2}a_t.
    \end{aligned}
    \label{eq:Lt.Lt1_ucb}
\end{align}
To verify Property~\ref{prop_3}, it suffices to show that $a_t^\top V_{t-1}^{-2}a_t \gtrsim \Lambda_{t-1}^{-3}$. From the spectral decomposition of $V_{t-1}$ in \eqref{eq:spectral_decomp}, we can obtain
\begin{align}
a_t^\top V_{t-1}^{-2}a_t
\geq
\frac{(v_{\min,t-1}^\top a_t)^2}{\lambda_{\min,t-1}^2}.
\label{eq:sherm.4.num}
\end{align}
Suppose that 
\begin{align} \label{new_addition_ucb}
(v_{\min,t-1}^\top a_t)^2
\geq \frac{\lambda^{\mathrm{UCB}}}
{(2+\sqrt n)^2}
\frac{1}{\lambda_{\min,t-1}}.
\end{align}
Then, \eqref{eq:sherm.4.num} implies 
\begin{align}
a_t^\top V_{t-1}^{-2}a_t
\geq
\frac{\lambda^{\mathrm{UCB}}}
{(2+\sqrt n)^2\lambda_{\min,t-1}^3}.
\label{eq:sherm.2.num}
\end{align}
Since
$
\operatorname{tr}(V_{t-1}^{-1})
\leq
\frac{n}{\lambda_{\min,t-1}},
$
we have $\frac{1}{\lambda_{\min,t-1}^3}
\geq
\frac{
    \operatorname{tr}(V_{t-1}^{-1})^3
}{
    n^3
}
=
\frac{1}{n^3\Lambda_{t-1}^3}$. Plugging this bound in the above inequality, we get
\begin{align}\label{a_t_lower_ucb}
a_t^\top V_{t-1}^{-2}a_t
\geq
\frac{(v_{\min,t-1}^\top a_t)^2}{\lambda_{\min,t-1}^2}
\geq
\frac{\lambda^{\mathrm{UCB}}}
{(2+\sqrt n)^2 n^3 \Lambda_{t-1}^3}.
\end{align}
Combining \eqref{eq:Lt.Lt1_ucb} and \eqref{a_t_lower_ucb} implies $\Lambda_t - \Lambda_{t-1} \gtrsim \Lambda_{t-1}^{-1}$  resulting in \eqref{eq:prop3_drift} with $ C_4^{-1}
:= {(2+\sqrt n)^2
    (1+ 1/\lambda^{\mathrm{UCB}})
n^3}/{\lambda^{\mathrm{UCB}}}$. Hence, to finalize the proof of Property~\ref{prop_3}, it remains to show \eqref{new_addition_ucb}. 
To that end, notice that maximizing the UCB objective over $\mathcal B_t$ with a spherical action set amounts to
maximizing $\|\theta\|$ over $\mathcal B_t$. Thus, a closed form solution can be given as
\[
\widetilde\theta_t
=
\widehat\theta_t
+
\rho_T
\sum_{i=1}^n
\frac{s_{i,t}}{\sqrt{\lambda_{i,t-1}}}
v_{i,t-1},
\qquad
s_{i,t}
\in
\operatorname{sign}
\bigl(v_{i,t-1}^\top\widehat\theta_t\bigr),
\]
with the sign at zero chosen according to a fixed deterministic
tie-breaking rule. Since
\begin{align}
\label{min_eiq_ucb}    
\left|v_{\min,t-1}^\top\widetilde\theta_t\right|
=
\left|v_{\min,t-1}^\top\widehat\theta_t\right|
+
\frac{\rho_T}{\sqrt{\lambda_{\min,t-1}}}
\geq
\frac{\rho_T}{\sqrt{\lambda_{\min,t-1}}}
>0,
\end{align}
we have $\widetilde\theta_t\neq0$. Thus, 
$
a_t
=
\frac{\widetilde\theta_t}{\|\widetilde\theta_t\|}.
$
Also,
\[
\begin{aligned}
\|\widetilde\theta_t\|
\leq
\|\widehat\theta_t\|
+
\rho_T
\left(
\sum_{i=1}^n \lambda_{i,t-1}^{-1}
\right)^{1/2} 
\leq
2\theta_{\max}
+
\rho_T\sqrt{\frac{n}{\lambda_{\min,t-1}}} 
\leq
2\theta_{\max}
+
\rho_T\sqrt{\frac{n}{\lambda^{\mathrm{UCB}}}}.
\end{aligned}
\]
where we used $\| \widehat \theta_t \| \leq 2 \theta_{\max}$ on $\mathcal{G}_{t-1}$. Therefore, \eqref{min_eiq_ucb} yields
\begin{align}
\begin{aligned}
(v_{\min,t-1}^\top a_t)^2
=
\frac{(v_{\min,t-1}^\top\widetilde\theta_t)^2}
{\|\widetilde\theta_t\|^2} &\geq
\frac{\rho_T^2}
{\lambda_{\min,t-1}
\bigl(
2\theta_{\max}
+
\rho_T\sqrt{n/\lambda^{\mathrm{UCB}}}
\bigr)^2} \\
& \geq \frac{\lambda^{\mathrm{UCB}}}
{(2+\sqrt n)^2}
\frac{1}{\lambda_{\min,t-1}}.
\end{aligned}\label{rho_t_lower_bounded_constant}
\end{align}
where the last line comes from the definition of $\rho_T$ in \eqref{eq:rhoT.def}, which implies that
$
\rho_T
\geq
\sqrt{\lambda^{\mathrm{UCB}}}\,\theta_{\max}
$
and therefore
\[
\begin{aligned}
2\theta_{\max}
+
\rho_T\sqrt{\frac{n}{\lambda^{\mathrm{UCB}}}}
\leq
\frac{2\rho_T}{\sqrt{\lambda^{\mathrm{UCB}}}}
+
\frac{\rho_T\sqrt n}{\sqrt{\lambda^{\mathrm{UCB}}}} =
\frac{\rho_T}{\sqrt{\lambda^{\mathrm{UCB}}}}
(2+\sqrt n).
\end{aligned}
\]
This proves \eqref{new_addition_ucb} and finalizes the proof of Property \ref{prop_3}.

We next verify Property~\ref{prop_2}. By
Theorem~\ref{theorem_unified_inference_bound}, with probability at
least $1-\frac{1}{4T}$,
\begin{align} \label{even_1_ucb}
\lambda_{\min,t}
\geq
\frac12\sqrt{C_4 t},
\qquad
t=M_T+1,\ldots,T.
\end{align}

Since $V_t\preceq V_{t-1}+I$, on the same event,
$
\lambda_{\min,t-1}
\geq
\lambda_{\min,t}-1
\geq
\frac14\sqrt{C_4 t}
$
for every sufficiently large $T$. Moreover, by the definition of $\mathcal B_t$ in \eqref{eq:define_bt},
\begin{align}
\begin{aligned}
\|\widetilde\theta_t-\widehat\theta_t\|^2
&\leq
\rho_T^2 \sum_{i=1}^n \lambda_{i,t-1}^{-1}
=
\rho_T^2\operatorname{tr}(V_{t-1}^{-1}) \leq
\frac{n\rho_T^2}{\lambda_{\min,t-1}}.
\end{aligned}\label{rho_t_used_prop_2}
\end{align}
Since $\rho_T^2\leq c_\rho\log T$ for some $T$-independent constant
$c_\rho>0$, on the intersection of $\mathcal G_{T-1}$ and the high-probability event in \eqref{even_1_ucb}, we obtain
\begin{align}
\|a_t-a^\star(\widehat\theta_t)\|^2
\leq
\frac{16}{\theta_{\min}^2}
\|\widetilde\theta_t-\widehat\theta_t\|^2 \leq 
\frac{64n c_\rho}
{\theta_{\min}^2\sqrt{C_4}}
\frac{\log T}{\sqrt t},
\end{align}
where the first inequality follows from Lemma \ref{lemma:xy} applied on $\mathcal G_{T-1}$ and the second one comes from \eqref{even_1_ucb} and  \eqref{rho_t_used_prop_2}. Therefore,
\begin{align}
\begin{aligned}
\sum_{t=M_T+1}^T
\|a_t-a^\star(\widehat\theta_t)\|^2
&\leq
\frac{64n c_\rho}
{\theta_{\min}^2\sqrt{C_4}}
\log T
\sum_{t=M_T+1}^T t^{-1/2} \\
&\leq
\frac{128n c_\rho}
{\theta_{\min}^2\sqrt{C_4}}
\sqrt T\log T.\label{final_result_ucb}
\end{aligned}
\end{align}
Finally, by \eqref{g_tdefined_3},
and the union bound, the intersection of $\mathcal G_{T-1}$ and the event in \eqref{even_1_ucb} occur with probability at least
$
1-\frac{1}{8T}-\frac{1}{4T}
\geq
1-\frac{1}{2T}$.
Thus, Property~\ref{prop_2} holds with
$C_2
:=
\frac{128n c_\rho}
{\theta_{\min}^2\sqrt{C_4}}$, as shown in \eqref{final_result_ucb}. This completes the proof.
\end{proof}

We emphasize that the proof of Theorem~\ref{thm:UCB} does not rely on the inclusion
$
\mathcal E_t\subseteq \mathcal B_t,
$
nor, more generally, on the optimism property of the construction. The enlargement from the ellipsoidal confidence set \(\mathcal E_t\) to the box \(\mathcal B_t\) was introduced for two reasons. First, it preserves the optimism-in-the-face-of-uncertainty principle of UCB, given that $\theta^\star\in\mathcal E_t\subseteq\mathcal B_t$ with high probability, and second, the optimization of the UCB objective over a box 
admits an explicit characterization. That latter aids mathematical analysis of Property~\ref{prop_3}. It is of import to notice that optimism as a concept never featured in the proof of Theorem~\ref{thm:UCB}. In particular, the argument never invokes \(\theta^\star\in\mathcal B_t\), the inclusion \(\mathcal E_t\subseteq\mathcal B_t\), or a comparison between the optimistic value attained by \(\widetilde\theta_t\) and the value associated with \(\theta^\star\). The proof, however, requires instead that the selected action generates sufficient information growth through the self-correction condition, while remaining sufficiently concentrated around the greedy action. Said differently, \emph{optimism is therefore a means to the end of generating the desired exploration behavior, rather than a prerequisite for the analysis.}

This observation allows the radius of \(\mathcal B_t\) to be chosen independently of the statistical confidence radius $\rho_T$. In particular, consider replacing \(\rho_T\) in the definition of \(\mathcal B_t\) by a fixed constant
$$
\rho\geq
\sqrt{\lambda^{\mathrm{UCB}}}\,\theta_{\max}.
$$
The resulting set need not contain \(\mathcal E_t\), and hence it need not be a confidence set for \(\theta^\star\). Nevertheless, the self-correction argument remains unchanged. Indeed, in the proof of Property \ref{prop_3}, we only require a lower bound on $\rho_T$ since at \eqref{rho_t_lower_bounded_constant} we replace $\rho_T$ with $
\sqrt{\lambda^{\mathrm{UCB}}}\,\theta_{\max},
$
a \(T\)-independent drift constant. This yields
$
\Lambda_t-\Lambda_{t-1}
\gtrsim
\Lambda_{t-1}^{-1}$ on $\mathcal G_{t-1}$.  Consequently, the same high-probability information-growth guarantee
$
\Lambda_t\gtrsim\sqrt t
$ continues to hold under the initialization condition-- $\lambda_{\min, m_T}\geq (\log T)^2$, whenever $\rho\geq
\sqrt{\lambda^{\mathrm{UCB}}}\,\theta_{\max}.$
The argument establishing Property~\ref{prop_2} is likewise preserved. Replacing \(\rho_T\) by \(\rho\) in \eqref{rho_t_used_prop_2} gives
$
\|\widetilde\theta_t-\widehat\theta_t\|^2
\leq
n\rho^2 \lambda_{\min,t-1}^{-1}.
$ Since self-correction yields \(\lambda_{\min,t-1}\gtrsim\sqrt t\), the resulting cumulative deviation satisfies
\begin{align}
\sum_{t=M_T+1}^T
\|a_t-a^\star(\widehat\theta_t)\|^2
\lesssim
\sqrt T,
\end{align}
and hence, in particular, Property~\ref{prop_2}. This illustrates the flexibility of the self-correction viewpoint: \emph{essential for optimal regret is sufficient inference growth together with controlled deviation from the estimated greedy action, not optimism itself}.

We formalize the observation about information growth with UCB over a box set with a constant $\rho$ in the following result. This variant will prove useful in understanding the behavior of the various algorithms over finite horizons in the next section.
\begin{corollary}[Constant-radius UCB]
\label{cor:constant_radius_ucb}
Suppose Assumption~\ref{assumption_1} holds.
Consider Algorithm~\ref{alg:ucb} with $\rho_T$ in the definition of $\mathcal B_t$ replaced with a constant
$
\rho\geq
\sqrt{\lambda^{\mathrm{UCB}}}\,\theta_{\max}
$. If 
$\lambda_{\min, m_T}\geq(\log T)^2$, then the policy generated by this algorithm satisfies Properties~\ref{prop_3} and~\ref{prop_2} for large $T$.
\end{corollary}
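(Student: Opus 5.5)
The plan is to rerun the proof of Theorem~\ref{thm:UCB} verbatim with $\rho_T$ replaced by the constant $\rho$ inside $\mathcal B_t$. Only two features of $\rho_T$ were used there. One is the lower bound $\rho_T\geq\sqrt{\lambda^{\mathrm{UCB}}}\,\theta_{\max}$ in \eqref{rho_t_lower_bounded_constant}. The other is the upper bound $\rho_T^2\lesssim\log T$ in \eqref{rho_t_used_prop_2}. The statistical radius $\rho_T$ still enters the argument, but only through the good events $\mathcal G_{t-1}$ in \eqref{g_tdefined_1} and the concentration \eqref{eq:self_normalized_AY}. Neither depends on how actions are chosen, so \eqref{g_tdefined_3} continues to hold under the assumption $\lambda_{\min,m_T}\geq(\log T)^2$.

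For Property~\ref{prop_3}, fix $t\in\{m_T+1,\ldots,T\}$ and work on $\mathcal G_{t-1}$, where $\|\widehat\theta_t\|\leq 2\theta_{\max}$. Maximizing $\theta^\top a^\star(\theta)=\|\theta\|$ over the box again gives the closed form
\[
\widetilde\theta_t=\widehat\theta_t+\rho\sum_{i=1}^n s_{i,t}\lambda_{i,t-1}^{-1/2}v_{i,t-1}.
\]
From this closed form:
\begin{itemize}
\item $|v_{\min,t-1}^\top\widetilde\theta_t|\geq\rho/\sqrt{\lambda_{\min,t-1}}>0$, so $a_t=\widetilde\theta_t/\|\widetilde\theta_t\|$;
\item $\|\widetilde\theta_t\|\leq 2\theta_{\max}+\rho\sqrt{n/\lambda^{\mathrm{UCB}}}\leq \rho(2+\sqrt n)/\sqrt{\lambda^{\mathrm{UCB}}}$, where the last step uses $\rho\geq\sqrt{\lambda^{\mathrm{UCB}}}\theta_{\max}$.
\end{itemize}
Together these give \eqref{new_addition_ucb}. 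Then \eqref{eq:sherm.4.num}, the bound $\lambda_{\min,t-1}^{-3}\geq (n\Lambda_{t-1})^{-3}$ and \eqref{eq:Lt.Lt1_ucb} yield the drift \eqref{eq:prop3_drift} with the same $C_4$ as in Theorem~\ref{thm:UCB}. This $C_4$ is independent of both $T$ and $\rho$, and the drift holds path-wise on $\mathcal G_{t-1}$.

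For Property~\ref{prop_2}, Theorem~\ref{theorem_unified_inference_bound} now applies. It gives $\lambda_{\min,t-1}\geq\frac14\sqrt{C_4t}$ for $t=M_T+1,\ldots,T$ with probability at least $1-\frac1{4T}$. The box geometry gives $\|\widetilde\theta_t-\widehat\theta_t\|^2\leq n\rho^2/\lambda_{\min,t-1}$. On $\mathcal G_{T-1}$ we have $\|\widehat\theta_t\|\geq\theta_{\min}/2$, so Lemma~\ref{lemma:xy} gives
\[
\|a_t-a^\star(\widehat\theta_t)\|^2\leq \frac{64n\rho^2}{\theta_{\min}^2\sqrt{C_4}}\,t^{-1/2}.
\]
Summing over $t$ bounds the total by $\frac{128n\rho^2}{\theta_{\min}^2\sqrt{C_4}}\sqrt T$, which is at most $C_2\sqrt T\log T$ for large $T$. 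A union bound with $\mathcal G_{T-1}$ puts the failure probability at $\frac1{8T}+\frac1{4T}\leq\frac1{2T}$.

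The only place I expect any real friction is the drift step. There one must check that nothing in Theorem~\ref{thm:UCB} secretly used $\theta^\star\in\mathcal B_t$ or the growth of $\rho_T$. The lower bound on $(v_{\min,t-1}^\top a_t)^2$ needs $\rho$ to dominate $\|\widehat\theta_t\|$ up to constants, and this is exactly what the hypothesis $\rho\geq\sqrt{\lambda^{\mathrm{UCB}}}\theta_{\max}$ supplies. Everything else transfers mechanically.
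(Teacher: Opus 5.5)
Your proposal is correct and follows essentially the same route as the paper, which justifies the corollary by rerunning the proof of Theorem~\ref{thm:UCB} with $\rho$ in place of $\rho_T$: the drift step needs only $\rho\geq\sqrt{\lambda^{\mathrm{UCB}}}\,\theta_{\max}$ (so $\rho$ cancels and $C_4$ is unchanged), and the concentration step combines $\|\widetilde\theta_t-\widehat\theta_t\|^2\leq n\rho^2/\lambda_{\min,t-1}$ with $\lambda_{\min,t-1}\gtrsim\sqrt t$ to give an $\mathcal O(\sqrt T)$ cumulative deviation. Your explicit checks match the paper's remarks: that $\mathcal G_{t-1}$ and \eqref{g_tdefined_3} do not depend on how actions are chosen, and that no step uses $\theta^\star\in\mathcal B_t$.
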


{ We end this section by noting a related information-growth mechanism for
a substantially modified UCB algorithm studied in
\cite{log_high_probability_bound}. Their LinUCB-VN algorithm operates in batches of $2(n-1)$ rounds. At each batch $t$, it constructs pairs of actions by perturbing the normalized least-squares estimate in the positive and negative directions of  $n-1$ eigenvectors of $V_{t-1}$:
\[
\frac{\widehat \theta_t}{\|\widehat \theta_t\|}
\pm
\frac{1}{\sqrt{\lambda_{\min,t-1}}}
v_{i}(V_{t-1}),
\qquad
i=1,\ldots,n-1,
\]
and then plays the normalized version of these vectors. Each batch collects two observations for every one of these $n-1$ directions. This produces an information accumulation guarantee of the form $\lambda_{\min,t} \gtrsim \sqrt{\lambda_{\max}(V_t)}$, with $\lambda_{\max}(V_t)$ denoting the maximum eigenvalue of $V_t$. The exploration dynamics are closely related to the
self-correcting mechanism formalized in Property~\ref{prop_3}. With LinUCB-VN, almost all directions, including the most poorly explored direction, get excited via perturbations that scale with $\lambda_{\min,t}^{-1/2}$.

There are, however, important differences between \cite{log_high_probability_bound} and our analysis. The mechanism in \cite{log_high_probability_bound} is enforced pathwise through
$2(n-1)$ carefully paired actions at every batch and is tailored to a substantially modified UCB procedure. Such a pathwise construction does
not directly describe randomized policies such as Thompson sampling, nor does it apply immediately to simpler one-action-per-round UCB variants
such as Algorithm~\ref{alg:ucb}. Our analysis, on the other hand,  abstracts the self-correction
phenomenon to a conditional per-step drift requirement. This weaker formulation is sufficient to obtain the $\sqrt{t}$-order $\lambda_{\min,t}$ growth needed in our regret analysis while allowing the underlying exploration mechanism to be either deterministic or randomized.
}

\section{What Happens in Short Horizons?}\label{sec:numerics}

In this section, we examine the finite-horizon limitations of our theoretical guarantees. Our analyses of TS and UCB require the horizon $T$ to be sufficiently large. In particular, the initialization condition $\lambda_{\min,m_T}\geq(\log T)^2$ ensures, through \eqref{eq:self_normalized_AY}, that the good-history event satisfies
\eqref{g_tdefined_3} for sufficiently large $T$. The forced-exploration phase given in \eqref{forced_exp_eq} can achieve this initialization. With this policy, cycling through an orthonormal basis of $\mathbb R^n$ yields $\lambda_{\min,t} \geq \lambda+\lfloor t/n\rfloor.
$ The minimum eigenvalue, therefore, grows at the rate of $t/n$ which is the fastest allowable growth rate since for any policy,
\begin{align}
\lambda_{\min,t}
\leq
\frac{\operatorname{tr}(V_t)}{n}
\leq 
\lambda+\frac{\sum_{s = 1}^t \|a_s\|^2}{n} \leq \lambda+\frac{t}{n} 
\end{align}
Consequently, even with the fastest allowable growth rate of $\lambda_{\min,t}$, the initialization cannot be achieved whenever
\begin{align}\label{impossible}
\lambda+\frac{m_T}{n}
<
(\log T)^2.
\end{align}
Substituting
$
m_T=\lceil \frac{1}{3}\sqrt{T}\log T\rceil
$
and ignoring regularization ($\lambda$) and rounding terms, the initialization condition requires
$
T
\gtrsim
9n^2(\log T)^2.
$ This highlights the finite-horizon limitation of the initialization scheme.

This raises the question of how the algorithms presented in this paper behave at more moderate horizons, where the initialization condition may not be attainable within \(m_T\). We therefore next investigate the finite-horizon performance of TS and UCB in regimes where the assumptions underlying our asymptotic guarantees need not yet hold.

We run our experiments on the Euclidean unit-ball action set in $\mathbb R^{10}$; all selected actions lie on its boundary. For all three algorithms, we run the experiment $10$ times with different $\theta^\star$ initializations and noise sequences across repetitions for $T = 10^5$. For a paired comparison, all three policies use the same true parameter and observation-noise sequence within each run. In each of the ten independent repetitions, we draw $g\sim\mathcal N(0,I_{10})$ and set $\theta^\star=g/\|g\|_2$. Rewards are $Y_t=a_t^\top\theta^\star+\varepsilon_t$, with independent $\varepsilon_t\sim\mathcal N(0,0.1^2)$, sampled before each repetition starts so that all three algorithms share the same noise sequence within that repetition. We set $\lambda_{\rm TS}=\lambda_{\rm UCB}=c_{\rm TS}=1$, initialize $V_0=I_{10}$ and $\widehat{\theta}_1=0$. These selections allow us to examine the $T$ values where the initialization requirement cannot be reached even with the $t/n$ order growth of $\lambda_{\min,t}$. Indeed, with $\lambda = 1, n = 10, T = 10^5$,
\begin{align}
\lambda+\frac{m_T}{n}
=
1+\frac{1214}{10}
<
132.55
\simeq
(\log T)^2,
\end{align}
Thus, the impossibility of satisfying the initialization condition follows from \eqref{impossible}. In the preceding, we compare three policies: linear Thompson sampling, given as Algorithm \ref{alg:thompson_sampling} with $c_{\rm TS}=1$ and $\eta_t \sim \mathcal N(0, I_{10})$, modified UCB given as Algorithm \ref{alg:ucb} with $\rho_T$ defined in \eqref{eq:rhoT.def}, and the same modified UCB policy with $\rho_T$ replaced with a constant $\rho$, as examined in Corollary \ref{cor:constant_radius_ucb}. In this section, we denote the modified UCB algorithms as Box UCB. For these experiments, the radius $\rho_T$ is calculated as
\[
\rho_T=0.1\sqrt{10\log\!\bigl(4 \times 10^5(1+10^5)\bigr)}+1 \simeq 2.56
\]
and the experiments are conducted with \(\rho=1\). Our theoretical results apply to all three algorithms, establishing high-probability inference guarantees of the form
$\Lambda_t \gtrsim \sqrt{t},
$ together with \(\mathcal O(\sqrt{T}\log T)\) regret guarantees, but only after the initialization requirement is established which cannot happen in these experiments at $T = 10^5$ for $n = 10$.

Figure~\ref{fig:l2_information} reports $\log\lambda_{\min,t}/\log t$ for $t\ge2$, and Figure~\ref{fig:l2_regret} reports $\log\mathscr{R}_{\theta^\star}(t)/\log t$. For each quantity, we first average the underlying curves across repetitions and then plot $\log \overline{X}_t/\log t$, where
$
\overline{X}_t:=\frac{1}{10}\sum_{i=1}^{10}X_t^i,
$
and $X_t^i$ denotes either $\lambda_{\min,t}$ or $\mathscr{R}_{\theta^\star}(t)$ in repetition $i$, with $i=1,\ldots,10$. Notice that if $X_t \simeq Ct^\alpha$, then
$
\log X_t/\log t
\simeq 
\alpha+\log C/ \log t,
$
so multiplicative constants generate slowly vanishing finite-horizon corrections. Thus, both $\log \lambda_{\min,t}/\log t$ and $\log \mathscr{R}_{\theta^\star}(t)/\log t$ retain finite-time effects from multiplicative constants. Because of this, we also estimate their growth exponents using the last $T/100$ time instances. For each algorithm, we first average $\lambda_{\min,t}$ and $\mathscr{R}_{\theta^\star}(t)$ across the $10$ repetitions and then, over the final $1\%$ of rounds, fit
$
\log \overline{X}_t = a\log t+b,
$
with $\overline{X}_t$ equal to the corresponding averaged curve. The fitted slope $a$ is reported as the estimated polynomial growth exponent for the regret or $\lambda_{\min,t}$ curves of the algorithms.

Figures~\ref{fig:l2_information} and~\ref{fig:l2_regret} show that Thompson sampling and Box UCB with $\rho=1$ exhibit very similar behavior in both information growth and cumulative regret. By contrast, Box UCB with the horizon-dependent radius $\rho_T$ produces systematically larger values of $\lambda_{\min,t}$, reflecting more aggressive exploration, and correspondingly larger cumulative regret. With $\rho_T$, the box UCB generates $\log\lambda_{\min,t}/\log t$ close to $0.6$, while $\log\mathscr{R}_{\theta^\star}(t)/\log t$ is also substantially larger than for the other two policies. These raw ratios, however, should not be interpreted directly as polynomial growth exponents. The tail regressions give a cleaner measure of the polynomial growth rates. Fitting $\log \overline{X}_t=a\log t+b$ to the averaged curves over the final $1\%$ of rounds yields $\lambda_{\min,t}$ exponents of $0.5091$, $0.5162$, and $0.5367$ for TS, Box UCB with $\rho=1$, and Box UCB with $\rho_T=2.5624$, respectively. The corresponding regret exponents are $0.4986$, $0.5030$, and $0.5167$. Thus, for TS and Box UCB with fixed radius, both information growth and regret are very close to $t^{1/2}$ behavior despite the visibly larger values of the raw log-ratios. The horizon-dependent UCB rule retains a stronger finite-horizon deviation away from $1/2$, with the deviation more pronounced for $\lambda_{\min,t}$. Nevertheless, the fitted slopes are substantially closer to $1/2$ than the raw log-ratios suggest, supporting the interpretation that much of the apparent excess growth of $\lambda_{\min,t}$ at $T=10^5$ in the raw log-ratios is a finite-horizon transient rather than evidence of a fundamentally different polynomial order. This phenomenon is consistent with previous empirical observations favoring randomized methods over optimism-based algorithms. As noted in \cite{abeille2025when},  
\begin{quote}Despite the strong theoretical performance of optimistic algorithms, randomised algorithms, such as Thompson sampling, have been shown to perform better in practice.
\end{quote}

To further investigate this behavior, we examine the self-correction mechanism underlying information acquisition. Recall that Property~\ref{prop_3} concerns the conditional drift of $
\Lambda_t$. Guided by this, we estimate:
\[
\frac{1}{T} \sum_{t = 1}^T \mathbb E_{t-1}\left[
\Lambda_{t-1}(\Lambda_t-\Lambda_{t-1})\right] 
\]
where for $t \geq m_T+1$, the summand $\mathbb E_{t-1}\left[
\Lambda_{t-1}(\Lambda_t-\Lambda_{t-1})\right] $ is bounded below by $C_4$ on the good-history event in Property \ref{prop_3}. Writing $N=10$ for the number of repetitions, we define the estimate
\begin{align}
\widehat C_4^{\mathrm{all}}
:=
\frac{1}{NT}
\sum_{r=1}^{N}\sum_{t=1}^{T}
\Lambda_{t-1}^{(r)}
\left(\Lambda_t^{(r)}-\Lambda_{t-1}^{(r)}\right),
\label{eq:c4_empirical_all}
\end{align}
where $r$ indexes repetitions and $\Lambda_0^{(r)}=\lambda/n$. To examine the behavior near the end of the experiment, we also compute the tail estimate using only the final $T/100$ observations from each of the ten repetitions $
\widehat C_4^{\mathrm{tail}}
:=
\frac{100}{NT}
\sum_{r=1}^{N}\sum_{t=0.99T+1}^{T}
\Lambda_{t-1}^{(r)}
\left(\Lambda_t^{(r)}-\Lambda_{t-1}^{(r)}\right).$

Nevertheless, the empirical averages and the theoretical constant $C_4$ have different interpretations: the former summarize drift along the observed trajectories, whereas the latter provides a uniform conditional lower bound under the assumptions of the analysis. Consequently, these estimates diagnose the self-correction mechanism but do not establish Property~\ref{prop_3} at the simulated horizon. Comparing the full-horizon and tail estimates also reveals whether the average scaled drift changes substantially near the end of the experiment; close agreement indicates that the measured self-correction is not driven primarily by the initial rounds.

The full-horizon estimates $\widehat C_4^{\mathrm{all}}$ are $0.011808$, $0.011770$, and $0.070798$ for TS, Box UCB with $\rho=1$, and Box UCB with $\rho_T$, respectively. The corresponding tail estimates $\widehat C_4^{\mathrm{tail}}$ are $0.012114$, $0.012072$, and $0.075558$. Thus, the full-horizon and tail estimates differ by approximately $2.6\%$, $2.6\%$, and $6.7\%$, respectively, indicating that the average scaled information drift changes only modestly in the final portion of the experiment, especially for TS and box UCB with $\rho = 1$. These two algorithms exhibit nearly identical empirical drift, whereas the horizon-dependent Box UCB produces estimates approximately six times as large. This stronger information acquisition accompanies the larger cumulative regret observed for the horizon-dependent policy, illustrating the cost of its more aggressive exploration.

The experiments therefore expose a finite-horizon cost of theoretically order-optimal optimism: the horizon-dependent UCB radius increases both information acquisition and cumulative regret at practical horizons. Furthermore, comparing $\rho$ vs $\rho_T$ for the box UCB, illustrates how an optimism-agnostic viewpoint can serve not only as an analysis framework, but also as a design principle. By identifying sufficient inference growth and action concentration as the key requirements, our framework allows exploration mechanisms to be evaluated and tuned without requiring optimism. Corollary~\ref{cor:constant_radius_ucb} demonstrates this flexibility: under its assumptions, constant-radius Box UCB satisfies the conditions needed for order-optimal regret, up to logarithmic factors. In our experimental setting, $\rho=1$ meets the corollary’s radius requirement and even though the algorithm fails the $\lambda_{\min, m_T} \geq (\log T)^2$ requirement, it achieves empirical performance comparable to TS, illustrating the practical value of this design freedom. This flexibility opens the possibility of improving finite-horizon exploration–exploitation behavior while retaining the structural conditions needed for asymptotic guarantees.

\begin{figure}[!t]
\centering

\begin{subfigure}[t]{0.49\linewidth}
    \centering
    \includegraphics[width=\linewidth]{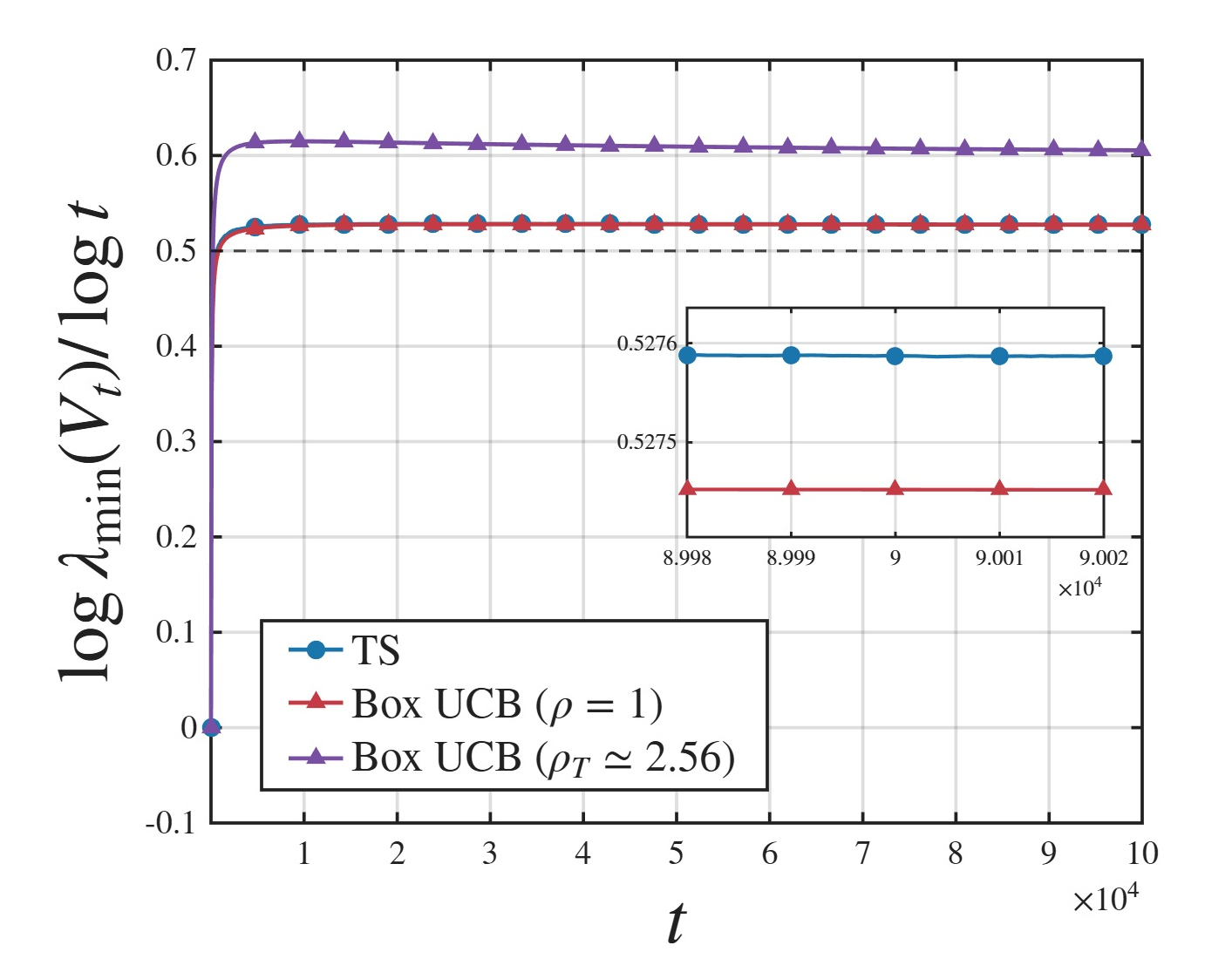}
    \caption{Minimum-eigenvalue growth:
    $\log\lambda_{\min,t}/\log t$.}
    \label{fig:l2_information}
\end{subfigure}
\hfill
\begin{subfigure}[t]{0.49\linewidth}
    \centering
    \includegraphics[width=\linewidth]{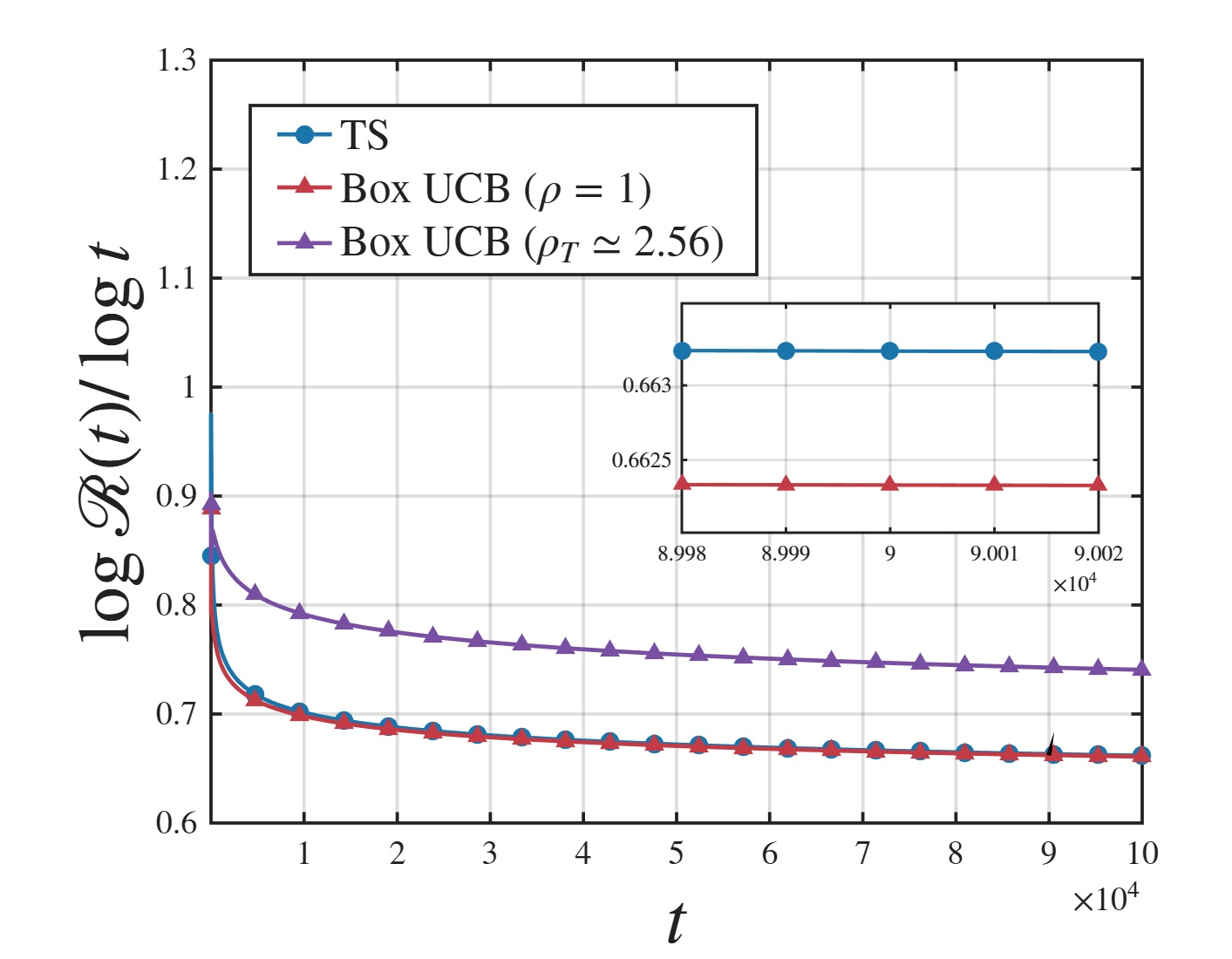}
    \caption{Regret growth:
    $\log\mathscr{R}_{\theta^\star}(t)/\log t$.}
    \label{fig:l2_regret}
\end{subfigure}

\caption{Experiment in $\mathbb R^{10}$ with
$T=10^5$, noise standard deviation $0.1$, and ten repetitions.
For each policy, the solid line is obtained by first averaging the
underlying quantity $X_t$ across repetitions to produce $\overline{X}_t$ and then plotting
$\log \overline{X}_t/\log t$, with
$X_t=\lambda_{\min,t}$ in panel~(a) and
$X_t=\mathscr{R}_{\theta^\star}(t)$ in panel~(b). Blue circles correspond to TS with $c_{\rm TS}=1$;
red triangles correspond to Box UCB with $\rho=1$;
and purple triangles correspond to Box UCB with the
horizon-dependent radius $\rho_T\approx2.56$.
The dashed horizontal line in panel~(a) marks the square-root benchmark
$1/2$.
Because the TS and fixed-radius Box UCB curves are nearly overlapping, each panel includes a magnified inset
centered at $t=0.9T=9\times10^4$, showing only the blue and red curves
for easier comparison.}
\label{fig:l2_numerics}
\end{figure}

\section{Conclusions}\label{sec:conclusion}

We have developed a unified perspective on inference quality and regret in linear bandits that identifies sufficient inference growth and action concentration as the key requirements for order-optimal regret, up to logarithmic factors. Central to this perspective is a self-correction mechanism that yields $\Omega(\sqrt{t})$ growth of $\lambda_{\min,t}$ under the stated assumptions. Together with controlled deviation from the estimated greedy action, this information growth bounds regret accumulation. The resulting framework accommodates both TS and UCB variants without requiring optimism as a premise of the analysis.

Our analysis focuses on the scaling of regret and inference guarantees
with elapsed time and the time horizon, and does not optimize the scalings on the problem
dimension. An important direction for future work is to determine whether the proposed technique
can recover order-optimal dimension dependence while preserving the time-dependent regret and
inference guarantees established here.

Furthermore, our results provide sufficient conditions for near-optimal regret, while leaving open the extent to which these conditions are also necessary. The connection between inference quality and regret established in \cite{trail}, including necessity results in expectation, motivates the study of analogous high-probability requirements. In particular, whether $\lambda_{\min,t}$ must grow at $\Omega(\sqrt{t})$ with high probability remains an open question.

Another direction is to extend the analysis beyond the Euclidean unit ball to more general action sets, including sublevel sets of strictly convex functions as considered in \cite{trail}. The geometric conditions studied in \cite{abeille2025when} may also help identify settings in which a more general optimism-agnostic regret analysis can be established.

Finally, we aim to investigate whether this optimism-agnostic approach extends to other adaptive decision and control problems, including linear quadratic regulators and Markov decision processes. The central challenge is to identify suitable measures of inference quality and the mechanisms that improve them while controlling the cost of exploration. Such extensions could provide a common basis for analyzing and designing exploration strategies across a broader class of sequential decision problems.

\bibliographystyle{IEEEtran}
\bibliography{adaptive}

\appendix

\end{document}